\newtheorem{lemma}{Lemma}
\newtheorem{theorem}{Theorem}
\newtheorem{definition}{Definition}
\definecolor{cvprblue}{rgb}{0.21,0.49,0.74}
\title{Weakly Supervised Contrastive Adversarial Training for Learning Robust Features from Semi-supervised Data}
\author{Lilin Zhang, Chengpei Wu, Ning Yang\thanks{Corresponding author.}\\
School of Computer Science, Sichuan University, Chengdu, China\\
{\tt\small zhanglilin@stu.scu.edu.cn, wuchengpei@stu.scu.edu.cn, yangning@scu.edu.cn}
}
\begin{document}
\maketitle
\begin{abstract}
Existing adversarial training (AT) methods often suffer from incomplete perturbation, meaning that not all non-robust features are perturbed when generating adversarial examples (AEs). This results in residual correlations between non-robust features and labels, leading to suboptimal learning of robust features. However, achieving complete perturbation—perturbing as many non-robust features as possible—is challenging due to the difficulty in distinguishing robust and non-robust features and the sparsity of labeled data. To address these challenges, we propose a novel approach called Weakly Supervised Contrastive Adversarial Training (WSCAT). WSCAT ensures complete perturbation for improved learning of robust features by disrupting correlations between non-robust features and labels through complete AE generation over partially labeled data, grounded in information theory. Extensive theoretical analysis and comprehensive experiments on widely adopted benchmarks validate the superiority of WSCAT. Our code is available at \href{https://github.com/zhang-lilin/WSCAT}{https://github.com/zhang-lilin/WSCAT}. 
\end{abstract}
\section{Introduction}
\label{sec:intro}
Deep neural networks (DNNs) have achieved remarkable success, but research has revealed their vulnerability to adversarial examples (AEs), which are generated by subtly perturbing natural samples \cite{biggio2013evasion,szegedy2013intriguing,ilyas2019adversarial,miller2020adversarial}. The threat posed by AEs has spurred ongoing efforts to enhance adversarial robustness—i.e., a model’s accuracy on AEs—with adversarial training (AT) being widely recognized as the most promising defense strategy \cite{bai2021recent, zhao2022adversarial}. AT employs a min-max game, where attacks maximize adversarial loss by generating AEs, while defenses minimize adversarial loss by adjusting the classifier, ultimately training a robust model capable of withstanding worst-case attacks.


\begin{figure}[t]
\centering
\subfloat[Incomplete perturbation]{\includegraphics[width=.45\columnwidth]{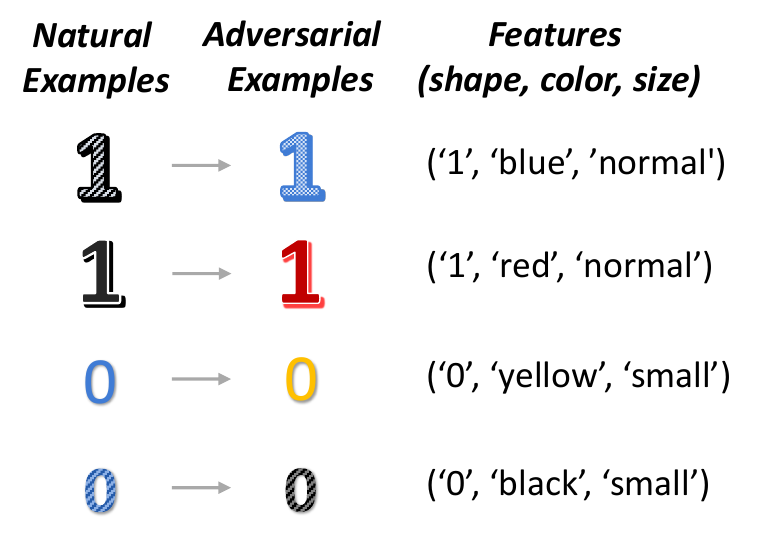} \label{fig-existing-ae}} 
\subfloat[Complete perturbation]{\includegraphics[width=.45\columnwidth]{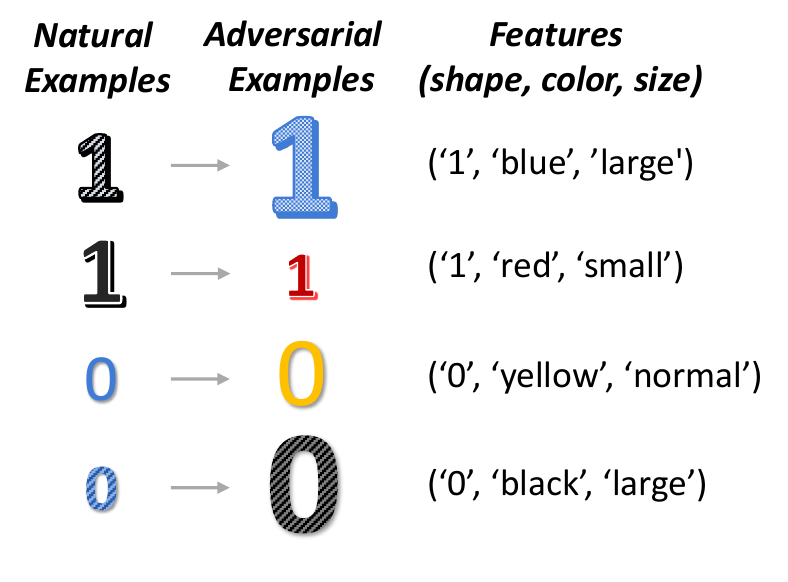} \label{fig-ours-ae}}
\caption{Illustration of non-robust features and their perturbations. For a digit image, color and size are the non-robust features correlating to the labels, while shape is the robust feature causing the labels. 
}
\label{Fig-motivation}
\end{figure}

Existing AT methods can be categorized into two groups based on how AEs are generated: point-wise AT \cite{madry2017towards, zhang2019theoretically, wang2023better, li2024focus, wang2024revisiting}, which perturbs individual natural samples independently, and distributional AT \cite{staib2017distributionally, zhang2024provable}, which samples AEs from a perturbed distribution of natural samples. Despite their success, these AT methods still face challenges in \textit{robust features learning}. Recent studies suggest that the existence of AEs stems from the presence of non-robust features in natural data \cite{ilyas2019adversarial}, which may exhibit statistical correlations with labels and can be manipulated by AEs. In contrast, robust features have a strong causal relationship with labels and remain unchanged under perturbation constraints.

However, current AT methods often blindly search for adversarial perturbations within an $\epsilon$-ball defined by a norm constraint \cite{cai2024and}, leading to \textit{incomplete perturbation}, where not all non-robust features are perturbed. The unperturbed non-robust features retain their correlation with labels, reducing adversarial robustness. Ideally, AEs should eliminate all correlations between non-robust features and labels through complete perturbation, ensuring that the model learns only robust features. For example, in \cref{Fig-motivation}, color and size are the non-robust features of digit images, while shape is the robust feature. In \cref{fig-existing-ae}, only the color of a digit image is modified, while size remains unchanged, allowing size to still correlate with labels. In contrast, \cref{fig-ours-ae} illustrates complete perturbation, where both color and size are altered, forcing the model to rely solely on shape as a robust feature.

Achieving complete perturbation in real-world scenarios is challenging due to two key obstacles. First, robust and non-robust features are not explicitly identifiable, as there are no supervisory signals to indicate their locations. Second, the sparsity of labeled data exacerbates this problem. To address these challenges, we propose Weakly Supervised Contrastive Adversarial Training (WSCAT), which leverages partially labeled data to guide a target classifier in learning robust features.

Specifically, we first notice that the mutual information between the embeddings of natural and adversarial data measures the amount of correlations to be preserved, minimizing which can block the correlations between non-robust features and labels. Inspired by this observation, we propose an information-theoretic complete AE generation for searching complete perturbations by introducing an additional mutual information constraint to the optimization of AE. Further, we replace the intractable mutual information with a weakly supervised dynamic loss to fulfill complete AE generation. The weakly supervised dynamic loss incorporates the predictions from current target classifier into InfoNCE loss, which gives a lower bound of mutual information \cite{poole2019variational}, to dynamically reflect the correlations between non-robust features and labels without the need of labels. The insight is that our approach does not explicitly seeking to identify robust or non-robust features. Instead, WSCAT blocks the correlations by generating complete AEs, ultimately achieving the goal of distilling non-robust features out of the sample embeddings. Our main contributions can be summarized as follows: 
\begin{itemize}
	\item We identify the issue of incomplete perturbation, which hinders the learning of robust features and hence leads to suboptimal adversarial robustness of target classifiers.
	
	\item We propose a novel solution called Weakly Supervised Contrastive Adversarial Training (WSCAT). WSCAT fulfills the learning of robust features on semi-supervised dataset by blocking the correlations between non-robust features and labels, via complete AE generation over partially labeled data, in a weakly supervised manner.
	
	\item  The solid theoretical analysis and the extensive experiments conducted on widely adopted benchmarks verify the superiority of WSCAT.
\end{itemize}

\section{Preliminary}
\label{sec:pre}

\subsection{Robust Feature}
The seminar work \cite{ilyas2019adversarial} demonstrates that adversarial vulnerability is a consequence of non-robust features in the data. These non-robust features are useful and highly predictive for standard generalization, yet easily broken by adversarial perturbations. In contrast, robust features can provide robust predictions even under adversarial perturbations, which are generally useful for the predictions of both natural samples and AEs. 

Let $\mathcal{X}$ be the set of samples, $\mathcal{Z} \subseteq \mathbb{R}^{k}$ be the $k$-dimensional embedding space, and $\mathcal{Y}$ be the set of labels. Let $C = g \circ f : \mathcal{X} \to \mathcal{Y}$ be the target classifier, where $f: \mathcal{X} \to \mathcal{Z}$ is the feature encoder and $g: \mathcal{Z} \to \mathcal{Y}$ is the decoder. Let $X$, $X^{\prime}$ and $Y$ be the random variables representing natural samples, AEs and labels, respectively. 
According to the idea of the existing works \cite{ilyas2019adversarial, kim2021distilling}, the concept of robust feature can be defined as follow: 
\begin{definition} [Robust Feature] Given a distance metric $d(\cdot , \cdot)$, loss function $l (\cdot , \cdot)$ and a perturbation budget $\epsilon$, the feature $f$ is $\rho_l$-$\gamma_d$-robust if $\rho_l = \inf_{g} \mathrm{E}_{ P_{X,Y}} [ l( g \circ f(X), Y ) ]$, and $\gamma_d = \mathrm{E}_{ P_{X}} [ \sup_{X^\prime \in \mathcal{B}_{\epsilon}(X)} d( f(X^\prime) , f(X) ) ]$, where $P_{X, Y}$ is the joint distribution of $X$ and $Y$, $P_{X}$ is the marginal distribution of $X$, and $\mathcal{B}_{\epsilon}(X) = \{ X^\prime $$: \Vert X^\prime - X \Vert_{\infty} \le \epsilon \}$ is the $\epsilon$-ball centered in $X$.
\label{Def_robust_feature}
\end{definition}
The idea of the above definition is that a robust feature $f$ is supposed to be predictive (resulting in smaller $\rho_l$) and invariant (resulting in smaller distance $\gamma_d$). \cref{Def_robust_feature} also tells us that robust features do not dramatically change under imperceptible perturbation constraint offered by the $\epsilon$-ball. Therefore, adversarial attacks can only rely on perturbing the non-robust features (which is also predictive and thus captured by the target model) to achieve the purpose of changing the model predictions. 

\subsection{Adversarial Training}
The traditional AT can be conceptually formulated as the following min-max game \cite{szegedy2013intriguing,bai2021recent,zhao2022adversarial}:
\begin{equation}
\min_{C} \mathrm{E}_{(X,Y) \sim P_{X,Y}} \big[ \max_{X^\prime \in \mathcal{B}_{\epsilon} (X)} l (X^\prime, X, Y) \big] , 
\label{Eq-AT}
\end{equation}
where $P_{X, Y}$ is the joint distribution of natural sample $X$ and its label $Y$, and $l (X^\prime, X, Y )$ is the loss on both natural and adversarial data of target classifier $C$. The loss $l (X^\prime, X, Y )$ can be implemented by any qualified forms, e.g., $\mathrm{CE} \big(C(X), Y\big) + \lambda \mathrm{CE} \big(C(X^\prime), Y\big)$ \cite{madry2017towards} and $\mathrm{CE} \big(C(X), Y\big) + \lambda \mathrm{KL} \big(C(X) \Vert C(X^\prime)\big)$ \cite{zhang2019theoretically}, where $\mathrm{CE}$ represents cross-entropy loss, $\mathrm{KL}$ represents Kullback–Leibler divergence, and $\lambda$ controls the trade-off between standard generalization and adversarial robustness. 


\section{Methodology}

\subsection{Information-theoretic Motivation}
\label{sec:moti}
The inner maximization in \cref{Eq-AT}, which plays the role of generating the AEs within the $\epsilon$-ball $\mathcal{B}_\epsilon(X)$, does not consider the amount of perturbed non-robust features, leading to incomplete adversarial perturbation and suboptimal robustness.  We argue that AEs should be generated via complete perturbation, ensuring that all non-robust features are disrupted so that their correlation with labels is eliminated.

In order to study AEs at feature level, we reformulate the AE generation process process into a distributional form:  
\begin{equation}
\begin{aligned}
\max_{P_{X^\prime}} \text{\quad} & \mathrm{E}_{(X,Y) \sim P_{X, Y} } \text{ }  l(X^\prime, X, Y)  \\ 
\text{s.t.\quad}  &\mathcal{W}_{\infty} (P_{X^\prime}, P_X) \le \epsilon, 
\end{aligned}
\label{Eq:ae}
\end{equation}
where $P_{X^\prime}$ represents the AE distribution, and $\mathcal{W}_{\infty} $ is the $p$-Wasserstein distance $\mathcal{W}_{p} $ \cite{villani2009optimal} for $p=\infty$, ensuring that AEs remain within the $\epsilon$-ball of natural samples. 
Specifically, $\mathcal{W}_{p} (P_{X^\prime}, P_X) = \inf_{\gamma \in \Gamma(P_{X^\prime}, P_X) } \{ \int \Vert X^\prime - X \Vert_{\infty}^p d \gamma \}^{1/p} $, where $\Gamma(P_{X^\prime}, P_X)$ is the set of couplings whose marginals are $P_{X^\prime}$ and $P_{X}$. Therefore, for $p=\infty$, no point between $P_X$ and $P_{X^\prime}$ can move more than $\epsilon$ from $X$ to $X^\prime$, i.e., $\forall X \sim P_{X}$, its corresponding AE $X^\prime \in \mathcal{B}_\epsilon(X)$ \cite{staib2017distributionally}. 

Since non-robust features lack explicit supervisory signals, we introduce a mutual information constraint to enforce complete perturbation. Specifically, let $Z^\prime = f(X^\prime)$ and $Z = f(X)$ denote the feature embeddings of AEs and natural samples, respectively. Assuming $Z$ consists of robust features $R$ and $K$ non-robust features $Z_{1:K} = \{Z_1, Z_2, ... ,Z_{K} \}$, we decompose the mutual information between features of natural and adversarial data $\mathrm{MI} (Z^\prime; Z)$ according to the chain rule of mutual information \cite{alajaji2018introduction}:
\begin{equation}
\begin{aligned}
\mathrm{MI} (Z^\prime; Z) = \mathrm{MI} (Z^\prime; R) + \sum_{k=1}^K \mathrm{MI} (Z^\prime; Z_k \vert R, Z_{1:k-1}). 
\end{aligned}
\end{equation} 
Due to the property of robust features (see \cref{Def_robust_feature}), robust features remain unchanged under adversarial perturbations, and thus $\mathrm{MI} (R; Z^\prime)$ is a constant equal to the entropy $\mathrm{H}(R)$. Minimizing $\mathrm{MI} (Z_k; Z^\prime \vert R, Z_{1:k-1})$ ensures that non-robust features are perturbed in a distributional sense. Thus, we define our information-theoretic complete AE generation as:
\begin{equation}
\begin{aligned}
\max_{P_{X^\prime}} \text{\quad} & \mathrm{E}_{(X,Y) \sim P_{X,Y} } \text{ }  l(X^\prime, X, Y) \\ 
\text{s.t.\quad}  & \mathcal{W}_{\infty} (P_{X^\prime}, P_X) \le \epsilon, \mathrm{MI}( Z^\prime; Z) = \mathrm{MI} (Z^\prime; R). 
\end{aligned}
\label{Eq-constrain}
\end{equation}
where $\mathrm{MI}( Z^\prime; Z) = \mathrm{MI} (Z^\prime; R)$ is an additional perturbation constraint beyond the perturbation budget $\epsilon$. However, it is difficult to directly solve the optimization problem defined by \cref{Eq-constrain}. We hence instead consider a Lagrangian relaxation with a fixed penalty parameter $ \beta \ge 0$, which results in the following feasible dual problem: 
\begin{equation}
\max_{P_{X^\prime} : \mathcal{W}_{\infty} (P_{X^\prime}, P_X) \le \epsilon} \big\{ \mathrm{E}_{P_{X,Y}} \text{ } l(X^\prime, X, Y)  - \beta \mathrm{MI}( Z^\prime; Z) \big\}.
\label{Eq-dual}
\end{equation}
Obviously, the optimization of \cref{Eq-dual} requires the minimization of $\mathrm{MI}( Z^\prime; Z)$, and the greater $\beta$, the less reserved mutual information. \cref{Fig-constrain} gives an illustration of our information-theoretic complete AE generation. As shown in \cref{fig-exact-beta}, the optimal selection of $\beta$, i.e., $\beta^*$ (corresponding to $\mathrm{MI}(Z^\prime; R)$ in duality), can force all the non-robust features to be perturbed and make $\mathrm{MI} (Z^\prime; Z)$ exactly cover the robust features. In contrast, excessively small $\beta < \beta^*$ leads to incomplete perturbation (\cref{fig-large-beta}), while excessively large $\beta >  \beta^*$ causes the loss of robust features (\cref{fig-small-beta}). 
\begin{figure}[t]
\centering
\subfloat[$\beta $$<$$ \beta^*$]{\includegraphics[scale=0.39]{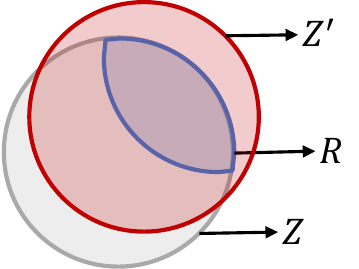} \label{fig-large-beta}} 
\subfloat[$\beta $$=$$ \beta^*$]{\includegraphics[scale=0.37]{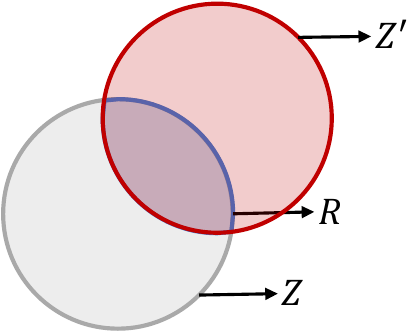} \label{fig-exact-beta}}
\subfloat[$\beta $$>$$ \beta^*$]{\includegraphics[scale=0.37]{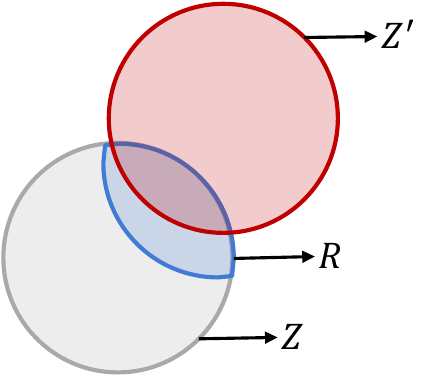} \label{fig-small-beta}}
\caption{Illustration of Information-theoretic Complete AE Generation, where blue, grey, and red areas represent the information contained in the robust features $R$, the features $Z$ of natural samples, and the features $Z^\prime$ of AEs, respectively, and the overlapped area represents mutual information $\mathrm{MI} (Z, Z^\prime)$ contained in the unperturbed features. 
}
\label{Fig-constrain}
\end{figure}

Actually, $Z$ consists of robust and non-robust features captured by the target classifier $C$ for prediction, which implicitly reflects the correlations between (both robust and non-robust) features and labels captured by $C$ by pointing out which features are encoded by $C$. Therefore, $\mathrm{MI}( Z^\prime; Z)$ measures the amount of correlations preserved by $C$, minimizing which will block some correlations. Furthermore, due to the invariant property of robust features, the blocked correlations are the ones between non-robust features and labels, rather than the ones between robust features and labels. Therefore, the additional mutual information constraint will enforce more diverse non-robust features to be perturbed and results in complete perturbation.

\subsection{WSCAT}

\subsubsection{Weakly Supervised Dynamic Loss}
Direct optimization of mutual information is computationally challenging. Let $\mathcal{D} = \mathcal{D}_l \cup \mathcal{D}_u$ be a semi-supervised dataset consists of a labeled dataset $\mathcal{D}_l$ and a unlabeled dataset $\mathcal{D}_u$, and $z$, $z^\prime$ denote the embeddings of $x$ and its AE $x^\prime$, i.e., $z = f(x)$, $z^\prime = f(x^\prime)$. One promising approach to approximate $\mathrm{MI} (Z^\prime; Z)$ is using InfoNCE loss $l_\mathrm{nce} (z^\prime, z)$, which provides a lower bound on it:
\begin{equation}
\begin{aligned}
l_\mathrm{nce} (z^\prime, z) = 
- \log{ \frac{e^{ s(z^\prime, z)} }{ \sum_{x_n \in \mathcal{D} } e^{ s(z^\prime, z_n)} } } , 
\end{aligned}
\label{Eq-NCE}
\end{equation}
where $z_n = f(x_n)$ is the embedding of $x_n$ and $s(\cdot, \cdot)$ is a similarity function. 

However, minimizing this lower bound does not necessarily result in $\mathrm{MI} (Z^\prime; Z)$ being minimized since the former is not a sufficient condition for the latter, which does not guarantee effective blocking of non-robust feature correlations. To address this, we further incorporate predictions from the target classifier $C$, which is so-called weak supervisions, into InfoNCE loss, defining a weakly supervised dynamic loss:
\begin{equation}
\begin{aligned}
l_\mathrm{con} (z^\prime, z) = 
- \frac{1}{ \vert \mathcal{N}^+_x \vert } \sum_{x_p \in \mathcal{N}^+_x} \log{ \frac{ e^{ s(z^\prime, z_p)} }{ \sum_{x_n \in \mathcal{D} } e^{ s(z^\prime, z_n)} } }, 
\end{aligned}
\label{Eq-CON}
\end{equation}
where the positive sample set $\mathcal{N}^+_x$ consists of samples that the current classifier $C$ predicts to belong to the same class as $x$, i.e., $\mathcal{N}^+_x = \{ x_p | x_p \in \mathcal{D} \land \text{argmax}_y C(x_p) =  \text{argmax}_y C(x) \}$. If another sample $x_1$ receives the same prediction as $x$, it shares the same positive sample set, i.e., $\mathcal{N}^+_{x_1} = \mathcal{N}^+_x$. This positive sample set effectively represents the current classifier $C$'s understanding of the label $y = \text{argmax}_y C(x) = \text{argmax}_y g(z)$, indicating which features are associated with this label. Consequently, it reflects the correlations captured by $C$ and facilitates the suppression of correlations with non-robust features. Moreover, as $C$ is continuously updated, the positive sample sets evolve over the course of training, allowing the captured correlations to be dynamically adjusted. This dynamic selection strategy for positive samples relies solely on weak supervision, making it independent of explicit labels and thus capable of leveraging both labeled and unlabeled data.

\subsubsection{Complete AE Generation}
Replacing the mutual information term in \cref{Eq-dual} with the weakly supervised dynamic loss, the complete AE generation process in WSCAT for a given sample $x$ from the semi-supervised dataset $\mathcal{D}$ is formulated as follows:
\begin{equation}
\max_{x^\prime \in \mathcal{B}_{\epsilon} (x)}  \big\{ \mathrm{KL} \big(C(x) \Vert C(x^\prime)\big) + \beta l_\mathrm{con} (z^\prime, z) \big\}, 
\label{Eq-AE}
\end{equation}
where $\mathrm{KL} \big(C(x) \Vert C(x^\prime)\big)$ ensures that the AE $x^\prime$ induces a significant shift in the output of the original sample $x$ (i.e., effectively perturbing non-robust features), while $ l_\mathrm{con} (z^\prime, z)$ promotes a more comprehensive perturbation at a distributional level (i.e., perturbing as many non-robust features as possible). \cref{Eq-AE} thus facilitates the generation of complete AEs, effectively disrupting the correlations between non-robust features and labels captured by the target classifier $C$ on the semi-supervised dataset $\mathcal{D}$. A more detailed analysis of this process is provided later.

As previously discussed in \cref{sec:moti}, selecting an appropriate value for $\beta$ is crucial for ensuring effective complete AE generation, and thus we design a methodology for tuning $\beta$. Since robust features contribute to accurate predictions for both natural and adversarial data, we leverage the harmonic mean—an approach sensitive to lower values—of the natural and robust accuracies on the validation set. This metric serves as an implicit indicator of the robustness of the features learned by target classifier $C$ and is used to determine the optimal value of $\beta$.

%

\subsubsection{Total Optimization}
By generating complete AEs for training, WSCAT's final optimization objective is defined as a min-max game: 
\begin{equation}
\begin{aligned}
& \min_{C=g\circ f} \Big\{ \frac{1}{\vert \mathcal{D} \vert} \sum_{x \in \mathcal{D^*}} \mathrm{CE} (C(x), y) \\
& + \frac{ \lambda}{\vert \mathcal{D} \vert} \sum_{x \in \mathcal{D}}  \max_{x^\prime \in \mathcal{B}_{\epsilon} (x)}  \big\{ \mathrm{KL} \big(C(x) \Vert C(x^\prime)\big) + \beta l_\mathrm{con} (z^\prime, z) \big\} \Big\}, 
\end{aligned}
\label{Eq-WSCAT}
\end{equation}
where $\lambda$ controls the weight of adversarial loss, and $\mathcal{D}^*$ is the augmented semi-supervised dataset. Specifically, we follow \cite{carmon2019unlabeled} to generate pseudo-labels for unlabeled data in a self-training manner by Mean Teacher (MT) algorithm \cite{tarvainen2017mean}, i.e., $\mathcal{D}^* = \mathcal{D}_l \cup \mathcal{D}^*_u$, where $\mathcal{D}_u^* = \{(x_u, y_u) | x_u \in \mathcal{D}_u, y_u = \text{argmax}_{y \in \mathcal{Y}} \text{ } C^* (x_u) \}$ and $C^*$ is the classifier trained by MT. 

In \cref{Eq-WSCAT}, the inner maximization fulfills complete AE generation, and the outer minimization performs the adversarial defense. Particularly, once the complete AEs are generated, the outer minimization will enforce the target model $C$ to capture the predictive features by the first line in \cref{Eq-WSCAT}, and guide $C$ to encode and only encode the robust features for prediction by the second line, which is because the features shared between natural samples and AEs are only robust features due to the complete perturbation offered by the outer AE generation.

\subsection{Theoretical Analysis}

\subsubsection{Correlation Blocking}
In this section, we analyze how the complete AE generation process in WSCAT effectively disrupts the correlations between non-robust features and labels through an additional maximization of the weakly supervised dynamic loss $l_\mathrm{con} (z^\prime, z)$. By design, AT enforces the prediction of an AE $x^\prime$ to remain consistent with that of its corresponding natural sample $x$, i.e.,$\text{argmax}_y g(z^\prime) = \text{argmax}_y g(z)$. Consequently, this consistency ensures that the differences between $z^\prime$ and $z$ no longer retain correlations with the label $y=\text{argmax}_y g(z)$, effectively eliminating their influence. Thus, our focus shifts to analyzing the differences between $z^\prime$ and $z$ that arise due to the maximization of $l_\mathrm{con} (z^\prime, z)$, which leads to the following theorem:
\begin{theorem}
$\max_{x^\prime \in \mathcal{B}_\epsilon (x)} l_\mathrm{con} (z^\prime, z) $ $\approx $ $\max_{x^\prime \in \mathcal{B}_\epsilon (x)} \frac{1}{ \vert \mathcal{N}^+_x \vert }$ $ \sum_{x_p \in \mathcal{N}^+_x}  \max \big\{ 0, \big\{ s(z^\prime, z_n) - s(z^\prime, z_p) \big\}_{x_n \in \mathcal{D}, x_n \ne x_p} \big\}$. 
\label{Th-CB}
\end{theorem}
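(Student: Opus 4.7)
The plan is to reduce the weakly supervised dynamic loss to a log-sum-exp expression and then apply the standard log-sum-exp-to-max approximation. Starting from the definition in \cref{Eq-CON}, I would pull the minus sign into the logarithm, which moves $e^{s(z^\prime, z_p)}$ from the numerator into the denominator inside the log, yielding
\begin{equation*}
l_\mathrm{con}(z^\prime, z) = \frac{1}{|\mathcal{N}^+_x|} \sum_{x_p \in \mathcal{N}^+_x} \log \sum_{x_n \in \mathcal{D}} e^{s(z^\prime, z_n) - s(z^\prime, z_p)} .
\end{equation*}
This rewrites the loss as an average over positive pairs of a log-sum-exp of similarity gaps $s(z^\prime, z_n) - s(z^\prime, z_p)$.

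Next, I would invoke the classical approximation $\log \sum_i e^{a_i} \approx \max_i a_i$, which is tight whenever a single exponent dominates the sum (equivalently, in the low-temperature regime of the implicit softmax). Applied term-wise with $a_n = s(z^\prime, z_n) - s(z^\prime, z_p)$, each inner log-sum-exp collapses to $\max_{x_n \in \mathcal{D}} \{s(z^\prime, z_n) - s(z^\prime, z_p)\}$.

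The key observation is that $x_p$ itself lies in $\mathcal{D}$, so the index $x_n = x_p$ contributes exactly $s(z^\prime, z_p) - s(z^\prime, z_p) = 0$ to the max. Therefore the maximum is automatically lower-bounded by $0$ and may be split as
\begin{equation*}
\max\bigl\{0,\; \{s(z^\prime, z_n) - s(z^\prime, z_p)\}_{x_n \in \mathcal{D},\, x_n \ne x_p}\bigr\} .
\end{equation*}
Averaging this over $x_p \in \mathcal{N}^+_x$ and then applying $\max_{x^\prime \in \mathcal{B}_\epsilon(x)}$ to both sides yields the target expression of \cref{Th-CB}, since the outer maximization is monotone and thus compatible with the pointwise approximation.

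The principal obstacle is to justify the log-sum-exp approximation, which is only strict in the zero-temperature limit. I would appeal to the sandwich inequality $\max_i a_i \le \log \sum_i e^{a_i} \le \max_i a_i + \log |\mathcal{D}|$, which controls the slack by an additive constant independent of $x^\prime$. Since this constant cancels when one compares optimizers on both sides of the approximation, the AE produced by maximizing $l_\mathrm{con}$ is (up to this bounded gap) the same as the AE produced by maximizing the hinge-like surrogate on the right, which is the sense in which the $\approx$ in the theorem statement should be read.
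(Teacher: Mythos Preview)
Your proposal is correct and mirrors the paper's proof almost exactly: both flip the sign to expose a log-sum-exp of similarity gaps, identify that the $x_n=x_p$ term contributes $e^0$ (equivalently, a $0$ inside the max), and then invoke the $\log\sum_i e^{a_i}\approx\max_i a_i$ approximation. Your additional sandwich-inequality remark about the $\log|\mathcal{D}|$ slack is a nice piece of extra justification the paper omits, but the core argument is identical.
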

\cref{Th-CB} demonstrates that maximizing $l_\mathrm{con} (z^\prime, z)$ leads to an adversarial example whose features (i.e., embedding $z^\prime$) become increasingly dissimilar to those of the samples in $\mathcal{N}^+_x$, which represents the features that the target classifier $C$ associates with the label $y=\text{argmax}_y g(z)$. By maximizing $l_\mathrm{con} (z^\prime, z)$, all non-robust features in $z^\prime$ are encouraged to deviate as much as possible from those typically classified as $y=\text{argmax}_y g(z)$, which results in the aforementioned thoroughly blocking. 
 

\subsubsection{Robust Features Learning}
Now we justify WSCAT's ability to learn robust-features satisfying \cref{Def_robust_feature}. Particularly, we will show that the features learned by WSCAT is $\rho_{l_\mathrm{nat}}$-$\gamma_{\Delta}$-robust w.r.t. distance metric $\Delta(z^\prime, z) = \vert l_\mathrm{con}(z^\prime, z) -  l_\mathrm{con}(z, z) \vert$ and loss function $l_\mathrm{nat} (x,y) = \mathrm{CE} (C(x), y) + \lambda \beta  l_\mathrm{con}(z, z)$. For this purpose, we first equivalently reformulate the min-max optimization objective of WSCAT (\cref{Eq-WSCAT}) as: 
\begin{equation}
\begin{aligned}
 \min_{C=g\circ f} \Big\{ & \underbrace{ \frac{1}{\vert \mathcal{D} \vert} \sum_{(x, y)  \in \mathcal{D}^*} l_\mathrm{nat} (x, y) }_{A_1} \\
& + \lambda \underbrace{ \frac{1}{\vert \mathcal{D} \vert} \sum_{x  \in \mathcal{D}} \max_{x^\prime \in \mathcal{B}_{\epsilon} (x)} l_\mathrm{adv} (x^\prime, x) }_{A_2} \Big\},
\end{aligned}
\label{Eq-min-max}
\end{equation}
where $l_\mathrm{adv} (x^\prime, x) = \mathrm{KL} (C(x) \Vert C(x^\prime)) + \beta ( l_\mathrm{con}(z^\prime, z) - l_\mathrm{con}(z, z) ) $. Then, we evaluate the robustness of the features learned by the target classifier $C$, which is optimized according to \cref{Eq-min-max}. This robustness is formally quantified in the following theorem:
\begin{theorem}
Let $\Delta_{\mathrm{m}}$ be the supremum of distance $\Delta (\cdot, \cdot)$ over $\{ (z^\prime, z) \vert x $$ \sim $$ P_{X} \land x^\prime$$ \in $$\mathcal{B}_\epsilon (x) \land z $$=$$ f(x) \land z^\prime $$=$$ f(x^\prime) \}$, $l_{\mathrm{m}}$ be the supremum of loss $l_\mathrm{nat} (\cdot, \cdot)$ over $ P_{X, Y}$, and $ \delta \in (0,1)$. 
Feature $f$ learned by WSCAT is $\rho_{l_\mathrm{nat}}$-$\gamma_{\Delta}$-robust, where \newline
1) $ \rho_{l_\mathrm{nat}} \le A_1 + l_{\mathrm{m}} \sqrt{\frac{ \log{\frac{1}{\delta}}}{2\vert \mathcal{D}_l \vert}} $ with probability at least $1 - \delta$; \newline
2) $ \gamma_{\Delta} \le \frac{1}{\beta} A_2 + \Delta_{\mathrm{m}} \sqrt{\frac{ \log{\frac{1}{\delta}}}{2\vert \mathcal{D} \vert}} $ with probability at least $1 - \delta$. 
\label{Th-RF}
\end{theorem}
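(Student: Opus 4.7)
The plan is to establish parts (1) and (2) independently, each by the same two-step template: a deterministic inequality that ties the population quantity in \cref{Def_robust_feature} to an empirical quantity built from the WSCAT training objective, followed by a Hoeffding-type concentration step that bridges population and sample.

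For part (1), I start from the defining infimum $\rho_{l_\mathrm{nat}}=\inf_g \mathrm{E}_{P_{X,Y}}[l_\mathrm{nat}(g\circ f(X),Y)]$ and upper bound it by plugging in the decoder $g$ of the learned classifier, obtaining $\rho_{l_\mathrm{nat}}\le \mathrm{E}_{P_{X,Y}}[l_\mathrm{nat}(X,Y)]$. I then invoke Hoeffding's inequality on $\mathcal{D}_l$, which consists of i.i.d.\ draws from $P_{X,Y}$ with $l_\mathrm{nat}\in[0,l_\mathrm{m}]$: with probability at least $1-\delta$, this expectation is bounded by the empirical mean of $l_\mathrm{nat}$ over $\mathcal{D}_l$ plus $l_\mathrm{m}\sqrt{\log(1/\delta)/(2|\mathcal{D}_l|)}$. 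Finally, I relate that empirical mean to $A_1$ using the non-negativity of $l_\mathrm{nat}$ on the pseudo-labeled portion of $\mathcal{D}^*$ together with the $1/|\mathcal{D}|$ normalization in $A_1$ (supplemented, if necessary, by the pseudo-label consistency of the Mean-Teacher construction).

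For part (2), I unfold $\gamma_\Delta = \mathrm{E}_{P_X}[\sup_{x'\in\mathcal{B}_\epsilon(x)}\Delta(f(x'),f(x))]$. The deterministic bridge is the pointwise inequality $\beta\sup_{x'}\Delta(z',z)\le \sup_{x'}l_\mathrm{adv}(x',x)$: since $\mathrm{KL}(C(x)\Vert C(x'))\ge 0$ and the feasible choice $x'=x$ gives $l_\mathrm{adv}=0$, the supremum of $l_\mathrm{adv}$ is attained in the regime $l_\mathrm{con}(z',z)\ge l_\mathrm{con}(z,z)$, where $\Delta$ equals the signed difference $l_\mathrm{con}(z',z)-l_\mathrm{con}(z,z)$ and hence $\beta\Delta\le l_\mathrm{adv}$. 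Applying Hoeffding to the $|\mathcal{D}|$ i.i.d.\ samples with summand $\sup_{x'}\Delta\in[0,\Delta_\mathrm{m}]$ then yields, with probability at least $1-\delta$, $\gamma_\Delta\le \tfrac{1}{|\mathcal{D}|}\sum_{x\in\mathcal{D}}\sup_{x'}\Delta(z',z) + \Delta_\mathrm{m}\sqrt{\log(1/\delta)/(2|\mathcal{D}|)}$, and the bridge upgrades the empirical mean into $\tfrac{1}{\beta}A_2$.

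The principal obstacle is the deterministic bridge in part (2): because $\Delta$ is an absolute value while $l_\mathrm{adv}$ uses a signed difference, the inequality $\beta\sup_{x'}\Delta\le\sup_{x'}l_\mathrm{adv}$ is not tautological. Establishing it requires showing that the worst-case $x'$ for $\Delta$ lies on the ``positive-sign'' side of $l_\mathrm{con}(z',z)-l_\mathrm{con}(z,z)$, which holds because the self-similarity term makes $l_\mathrm{con}(z,z)$ essentially minimal over the $\epsilon$-ball, so the supremum of the absolute value is achieved by pushing $l_\mathrm{con}(z',z)$ above $l_\mathrm{con}(z,z)$. A secondary, more routine subtlety in part (1) is the normalization mismatch between the empirical mean over $\mathcal{D}_l$ produced by Hoeffding and $A_1$ (which is averaged over $\mathcal{D}^*$ with denominator $|\mathcal{D}|$); this is handled by non-negativity of $l_\mathrm{nat}$ and the structural relation between $|\mathcal{D}_l|$ and $|\mathcal{D}|$.
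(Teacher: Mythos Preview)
Your proposal is essentially the paper's proof: for part~(1), drop the infimum by plugging in the learned decoder and apply Hoeffding over $\mathcal{D}_l$; for part~(2), apply Hoeffding over $\mathcal{D}$ to the bounded summand $\sup_{x'}\Delta(z',z)$, then use $\mathrm{KL}\ge 0$ to absorb $\beta\,\Delta$ into $l_\mathrm{adv}$ and recover $\tfrac{1}{\beta}A_2$.

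One caveat on part~(1): your ``non-negativity plus $1/|\mathcal{D}|$ normalization'' bridge goes the wrong way. From $l_\mathrm{nat}\ge 0$ on the pseudo-labeled portion you only obtain $A_1\ge \tfrac{1}{|\mathcal{D}|}\sum_{\mathcal{D}_l}l_\mathrm{nat}$, whereas what is needed is $A_1\ge \tfrac{1}{|\mathcal{D}_l|}\sum_{\mathcal{D}_l}l_\mathrm{nat}$, and $|\mathcal{D}|\ge|\mathcal{D}_l|$ makes the former weaker, not stronger. The paper closes this step precisely via what you list as the backup: it invokes self-training results \cite{wei2020theoretical,carmon2019unlabeled} to argue directly that the generalization gap computed on $\mathcal{D}^*$ is no larger than the one computed on $\mathcal{D}_l$, i.e.\ $\tfrac{1}{|\mathcal{D}^*|}\sum_{\mathcal{D}^*}l_\mathrm{nat}\ge \tfrac{1}{|\mathcal{D}_l|}\sum_{\mathcal{D}_l}l_\mathrm{nat}$. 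So you should lead with the pseudo-label consistency argument, not treat it as optional. On part~(2), your discussion of the absolute-value sign is in fact more careful than the paper, which simply writes $\sup_{x'}\Delta(z',z)\le\sup_{x'}\{l_\mathrm{con}(z',z)-l_\mathrm{con}(z,z)\}$ without comment.
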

\cref{Th-RF} indicates that: 1) WSCAT leads to smaller $\rho_{l_\mathrm{nat}}$ and $\gamma_{\Delta}$, i.e., more robust feature $f$; 2) the more the semi-supervised data (namely larger $\vert \mathcal{D} \vert$), the smaller the generalization error bounds. \cref{Th-RF} explains how WSCAT utilizes semi-supervised data to learn robust features, which forms the theoretical foundation of WSCAT. 

The detailed proofs of the theorems can be found in supplementary material.

\begin{table}[t]
	\centering
	\caption{Configuration of Datasets. }
	\label{Tab-datasets}
	\setlength{\tabcolsep}{0.15cm}
	\begin{tabular}{l|c|c|c|c}
	\toprule
	\multirow{3}{*}{Datasets}&\multicolumn{2}{c|}{training}&\multirow{3}{*}{testing}&\multirow{3}{*}{classes}\\
	\cmidrule{2-3}
	&{labeled}&{unlabeled}&\\
	\midrule  
	{CIFAR10}&{4,000}&{46,000}&{10,000}&{10}\\
	{CIFAR100}&{10,000}&{40,000}&{10,000}&{100}\\
	{ImageNet32-100}&{10,000}&{116,100}&{5,000}&{100}\\
	\bottomrule
	\end{tabular}
\end{table}

\begin{table*}[t]
	\renewcommand{\arraystretch}{}
	\centering
	\caption{Performance on \textbf{CIFAR10}. The best results of AT methods are bold. }
	\label{Tab-RQ1-CIFAR10}
	\setlength{\tabcolsep}{0.1cm}
	\begin{tabular}{l|c|cccc|c|ccc}
	\toprule
	{Methods}&{Standard}&{TRADES}&{RLFAT}&{DMAT}&{WSCAT-sup}&{MT}&{RST}&{PUAT}&{WSCAT}\\
	\midrule
	{Natural}&{${78.85}_{\pm0.20}$}&{${58.95}_{\pm2.35}$}&{${64.95}_{\pm0.58}$}&{${61.05}_{\pm0.20}$}
	&{${62.82}_{\pm0.19}$}
	&{${87.97}_{\pm0.36}$}&{${73.22}_{\pm0.71}$}&{$\bm{81.88}_{\pm0.16}$}&{${80.93}_{\pm0.14}$}\\
	{FGSM}&{${4.37}_{\pm0.41}$}&{${30.15}_{\pm1.23}$}&{${32.58}_{\pm1.10}$}&{${36.50}_{\pm0.57}$}
	&{${36.12}_{\pm0.16}$}
	&{${43.56}_{\pm1.02}$}&{${49.39}_{\pm0.33}$}&{${54.40}_{\pm0.05}$}&{$\bm{59.62}_{\pm0.16}$}\\
	{PGD}&{${0.00}_{\pm0.00}$}&{${28.09}_{\pm1.15}$}&{${29.52}_{\pm0.95}$}&{${34.75}_{\pm0.49}$}
	&{${34.53}_{\pm0.08}$}
	&{${1.00}_{\pm0.12}$}&{${46.10}_{\pm0.29}$}&{${46.63}_{\pm0.52}$}&{$\bm{58.52}_{\pm0.22}$}\\
	{CW}&{${0.00}_{\pm0.00}$}&{${26.15}_{\pm1.43}$}&{${26.63}_{\pm0.80}$}&{${30.11}_{\pm0.41}$}
	&{${30.60}_{\pm0.21}$}
	&{${0.78}_{\pm0.12}$}&{${43.95}_{\pm0.30}$}&{${46.70}_{\pm0.37}$}&{$\bm{53.15}_{\pm0.08}$}\\
	{AA}&{${0.00}_{\pm0.00}$}&{${25.44}_{\pm1.34}$}&{${25.72}_{\pm0.79}$}&{${29.68}_{\pm0.41}$}
	&{${30.00}_{\pm0.21}$}
	&{${0.00}_{\pm0.00}$}&{${42.94}_{\pm0.24}$}&{${44.46}_{\pm0.28}$}&{$\bm{52.23}_{\pm0.06}$}\\
	\midrule
	{Mean}&{${0.00}_{\pm0.00}$}&{${30.62}_{\pm1.38}$}&{${31.97}_{\pm0.94}$}&{${35.85}_{\pm0.44}$}
	&{${36.08}_{\pm0.18}$}
	&{${0.00}_{\pm0.00}$}&{${49.21}_{\pm0.27}$}&{${52.12}_{\pm0.32}$}&{$\bm{59.40}_{\pm0.05}$}\\
	\bottomrule
	\end{tabular}
\end{table*}

\begin{table*}[t]
	\renewcommand{\arraystretch}{}
	\centering
	\caption{Performance on \textbf{CIFAR100}. The best results of AT methods are bold. }
	\label{Tab-RQ1-CIFAR100}
	\setlength{\tabcolsep}{0.1cm}
	\begin{tabular}{l|c|cccc|c|ccc}
	\toprule
	{Methods}&{Standard}&{TRADES}&{RLFAT}&{DMAT}&{WSCAT-sup}&{MT}&{RST}&{PUAT}&{WSCAT}\\
	\midrule
	{Natural}&{${58.42}_{\pm0.15}$}&{${39.17}_{\pm0.38}$}&{${41.84}_{\pm0.85}$}    &{${37.25}_{\pm0.76}$}
	&{${39.12}_{\pm0.79}$}
	&{${66.76}_{\pm0.40}$}&{${47.54}_{\pm0.15}$}&{${50.21}_{\pm0.22}$}&{$\bm{55.14}_{\pm0.52}$}\\
	{FGSM}&{${3.39}_{\pm0.19}$}&{${14.64}_{\pm0.12}$}&{${15.66}_{\pm0.32}$}&{${10.03}_{\pm0.22}$}
	&{${17.06}_{\pm0.43}$}
	&{${13.06}_{\pm0.29}$}&{${21.59}_{\pm0.07}$}&{${23.66}_{\pm0.39}$} &{$\bm{28.41}_{\pm0.09}$}\\
	{PGD}&{${0.02}_{\pm0.00}$}&{${12.79}_{\pm0.12}$}&{${13.36}_{\pm0.23}$}&{${8.67}_{\pm0.21}$}
	&{${15.43}_{\pm0.47}$}
	&{${0.09}_{\pm0.02}$}&{${18.39}_{\pm0.08}$}&{${19.43}_{\pm0.36}$}&{$\bm{25.26}_{\pm0.32}$} \\
	{CW}&{${0.01}_{\pm0.00}$}&{${12.00}_{\pm0.05}$}&{${11.88}_{\pm0.19}$}&{${5.47}_{\pm0.06}$}
	&{${13.86}_{\pm0.38}$}
	&{${0.08}_{\pm0.02}$}&{${17.58}_{\pm0.05}$}&{${19.18}_{\pm0.29}$}&{$\bm{22.99}_{\pm0.41}$} \\
	{AA}&{${0.00}_{\pm0.00}$}&{${11.31}_{\pm0.17}$}&{${11.06}_{\pm0.10}$}&{${4.88}_{\pm0.09}$}
	&{${13.08}_{\pm0.38}$}
	&{${0.01}_{\pm0.00}$}&{${16.56}_{\pm0.12}$}&{${17.41}_{\pm0.24}$}&{$\bm{21.87}_{\pm0.40}$}  \\
	\midrule
	{Mean}&{${0.00}_{\pm0.00}$}&{${14.54}_{\pm0.11}$}&{${14.83}_{\pm0.22}$}&{${7.94}_{\pm0.15}$}
	&{${16.80}_{\pm0.46}$}
	&{${0.03}_{\pm0.02}$}&{${20.92}_{\pm0.08}$}&{${22.40}_{\pm0.31}$} &{$\bm{27.43}_{\pm0.29}$} \\
	\bottomrule
	\end{tabular}
\end{table*}

\section{Experiment}

The goal of experiments is to answer the following research questions (RQs): 
1) How does WSCAT perform as compared to the state-of-the-art baselines in terms of the standard generalization and adversarial robustness? 2) Can WSCAT conduct complete perturbations? 3) Which components of WSCAT contribute to its performance? 4) How does weakly supervised dynamic loss and the unlabeled samples influence the performance of WSCAT? 5) How long is the training time of WSCAT? All the experiments are conducted on an NVIDIA RTX 4090 GPU. 

\subsection{Experimental Setting}
\subsubsection{Datasets and Baselines}
The experiments are conducted on CIFAR10 \cite{krizhevsky2009learning}, CIFAR100 \cite{krizhevsky2009learning} and ImageNet32 \cite{chrabaszcz2017downsampled}, which are widely used benchmarks for evaluating AT. For ImageNet32, we use the subset consisting of its first 100 classes, and denotes the subset as ImageNet32-100. To simulate the semi-supervised scenario, we partially discard the labels to get unlabeled data on all datasets. The dataset configurations are shown in the \cref{Tab-datasets}, which will be randomly conducted for 3 times. Besides, we randomly select 20\% of the labeled training data as validation set for the tuning of hyper-parameters on all datasets. We compare WSCAT with baseline methods including: 1) supervised non-defensive method \textbf{Standard} \cite{zagoruyko2016wide}; 2) supervised defensive methods \textbf{TRADES} \cite{zhang2019theoretically}, \textbf{RLFAT} \cite{song2019robust}, \textbf{DMAT} \cite{wang2023better} and \textbf{WSCAT-sup} (a variant of WSCAT that only uses labeled data); 3) semi-supervised non-defensive method \textbf{MT} \cite{tarvainen2017mean}; 4) semi-supervised defensive method \textbf{RST} \cite{carmon2019unlabeled} and \textbf{PUAT} \cite{zhang2024provable}.

\subsubsection{Evaluation Protocol}
The performance of our method and the baselines is evaluated in terms of the standard generalization and adversarial robustness. The standard generalization (natural accuracy) is measured by the accuracy on natural test set, which is referred to as \textbf{Natural}, while the adversarial robustness (robust accuracy) is measured by the accuracy on test set after adversarial attacks, i.e., accuracy on AEs generated by attacks over test set. In particular, we choose widely adopted $l_\infty$ norm attacks including \textbf{FGSM} \cite{goodfellow2014explaining}, \textbf{PGD} \cite{madry2017towards}, \textbf{CW} \cite{carlini2017towards} and \textbf{Auto Attack (AA)} \cite{croce2020reliable}. The perturbation budget $\epsilon$ is set to 8/255 for all attacks. We set step size as 1/255, and the number of update steps as 20 for PGD and CW, and use the standard version Auto Attack. Since robust features are predictive for both natural and adversarial data, the harmonic mean of the natural and robust accuracies (i.e., Natural, FGSM, PGD, CW, and AA) referred to as \textbf{Mean} are reported, which is sensitive to smaller extremes and thus can implicitly reflect how robust the learned features are. The robust accuracy against \textbf{NRF} \cite{kim2021distilling}, an attack targeting non-robust features, can be found in supplementary material. 

\begin{table*}[t]
	\renewcommand{\arraystretch}{}
	\centering
	\caption{Performance on \textbf{ImageNet32-100}. The best results of AT methods are bold. }
	\label{Tab-RQ1-ImageNet32}
	\setlength{\tabcolsep}{0.20cm}
	\begin{tabular}{l|c|ccc|c|ccc}
	\toprule
	{Methods}&{Standard}&{TRADES}&{RLFAT}&{WSCAT-sup}&{MT}&{RST}&{PUAT}&{WSCAT}\\
	\midrule
	{Natural}&{${37.53}_{\pm0.42}$}&{${10.08}_{\pm0.46}$}&{${18.37}_{\pm0.25}$} 
	&{${15.75}_{\pm0.01}$}
	&{${45.45}_{\pm2.67}$}&{${28.64}_{\pm1.12}$}&{${25.63}_{\pm1.41}$}&{$\bm{34.64}_{\pm2.76}$}\\
	{FGSM}&{${0.29}_{\pm0.02}$}&{${2.58}_{\pm0.06}$}&{${2.64}_{\pm0.17}$}
	&{${3.06}_{\pm0.24}$}
	&{${1.99}_{\pm0.96}$}&{${6.97}_{\pm0.87}$}&{${8.14}_{\pm0.18}$}&{$\bm{12.63}_{\pm0.13}$}\\
	{PGD}&{${0.00}_{\pm0.00}$}&{${2.45}_{\pm0.05}$}&{${1.96}_{\pm0.09}$}
	&{${2.53}_{\pm0.19}$}
	&{${0.00}_{\pm0.00}$}&{${5.28}_{\pm0.76}$}&{${6.30}_{\pm0.20}$}&{$\bm{9.89}_{\pm0.35}$}\\
	{CW}&{${0.00}_{\pm0.00}$} &{${1.67}_{\pm0.05}$}&{${1.53}_{\pm0.12}$}
	&{${1.94}_{\pm0.18}$}
	&{${0.00}_{\pm0.00}$}&{${4.25}_{\pm0.23}$}&{${6.06}_{\pm0.20}$}&{$\bm{8.01}_{\pm0.37}$}\\
	{AA}&{${0.00}_{\pm0.00}$}&{${1.40}_{\pm0.12}$}&{${1.35}_{\pm0.12}$}
	&{${1.69}_{\pm0.13}$}
	&{${0.00}_{\pm0.00}$}&{${3.68}_{\pm0.32}$}&{${5.31}_{\pm0.03}$}&{$\bm{7.27}_{\pm0.33}$}\\
	\midrule
	{Mean}&{${0.00}_{\pm0.00}$}&{${2.26}_{\pm0.06}$}&{${2.14}_{\pm0.14}$}
	&{${2.64}_{\pm0.21}$}
	&{${0.00}_{\pm0.00}$}&{${5.71}_{\pm0.54}$}&{${7.42}_{\pm0.13}$}&{$\bm{10.59}_{\pm0.32}$} \\
	\bottomrule
	\end{tabular}
\end{table*}

\subsubsection{Hyper-parameters and Implementation}
We employ WRN-28-10 \cite{zagoruyko2016wide} as the target model $C$, and adopt validation-based early stopping \cite{rice2020overfitting} for all methods over all datasets. The optimizer is SGD with Nesterov momentum \cite{nesterov1983method} and cosine learning rate schedule \cite{smith2019super}, where learning rate, weight decay, and momentum are set to 0.1, 5e-4, and 0.9, respectively. One batch consists 128 samples, where the proportion of labeled samples is consistent with that in whole dataset. Meanwhile, we compute the sum in \cref{Eq-CON} over the current batch instead of entire dataset for each update of $C$. A 10-step PGD with $l_\infty$ normed perturbation bound of $8/255$ and step size of $2/255$ is used to generate AEs for all datasets. $\beta$ is tuned on the validation set according to the harmonic mean of natural accuracy and robust accuracy against PGD. On CIFAR10, CIFAR100 and ImageNet32-100, $\beta$ is set to 0.05, 0.05 and 0.2, and $\lambda$ is set to 5.0, 1.0 and 1.0, respectively. 

\subsection{Experimental Results}
\subsubsection{Performance Comparison (RQ1)}

\cref{Tab-RQ1-CIFAR10,Tab-RQ1-CIFAR100,Tab-RQ1-ImageNet32} show the performances of WSCAT and baselines. We can make the following observations: 1) WSCAT surpasses baselines on all datasets in robust accuracies, and at the same time, achieves a high natural accuracy compared with other AT methods, and leads to a highest harmonic mean accuracy. This observation shows that WSCAT can increase adversarial robustness with less sacrifice of standard generalization due to its ability to distill the robust features. 2) Semi-supervised methods always outperform supervised methods, and WSCAT surpasses its supervised variant WSCAT-sup, which justifies our motivation to utilize unlabeled data for better learning of robust features. 3) WSCAT-sup also achieve comparable performance within supervised methods in terms of harmonic mean, which further demonstrates the benefits of complete AEs on breaking the correlations between non-robust features and labels. We also experiment WSCAT-sup under fully-supervised setting with other model architectures, the results of which are reported in supplementary material. 

\subsubsection{Complete Perturbation (RQ2)}

Since the more non-robust features are perturbed, the lower similarity between embeddings of natural samples and their corresponding AEs, we show our complete AE generation strategy's ability to perturb non-robust features as many as possible by this similarity. Specifically, given the same natural dataset, we compare the distribution of similarity between the embeddings of natural samples and AEs generated by TRADES ($\max_{x^\prime \in \mathcal{B}_{\epsilon} (x)} \mathrm{KL} (C(x) \Vert C(x^\prime))$, which is also adopted by DMAT and RST), RLFAT (adds Random Block Shuffle on AE generated by TRADES) and WSCAT ($\max_{x^\prime \in \mathcal{B}_{\epsilon} (x)} \{ \mathrm{KL} (C(x) \Vert C(x^\prime)) + \beta l_\mathrm{con} (z^\prime, z) \}$). From \cref{Fig-similarity}, we can observe that AEs generated by WSCAT are more dissimilar to natural samples than those generated by TRADES or RLFAT, which shows more non-robust features are perturbed by WSCAT due to its complete AE generation fulfilled by the weakly supervised dynamic loss $l_\mathrm{con} (z^\prime, z)$, and empirically demonstrates the conclusion of \cref{Th-CB}. 

\begin{figure}[t]
\centering
\subfloat[CIFAR10]{\includegraphics[width=.3333\columnwidth]{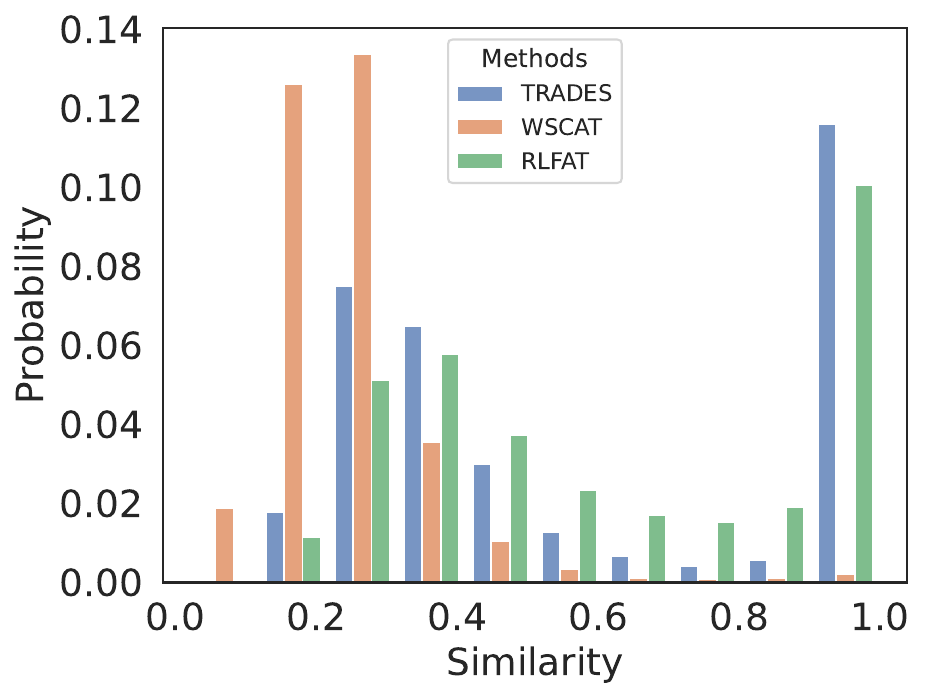} \label{fig-sim-cifar10}} 
\subfloat[CIFAR100]{\includegraphics[width=.3333\columnwidth]{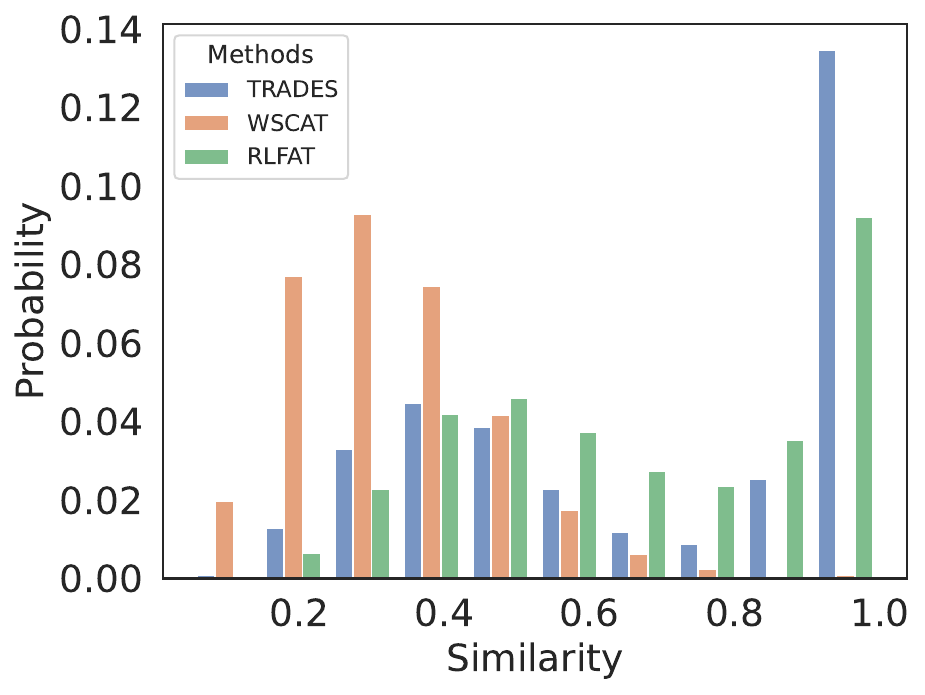} \label{fig-sim-cifar100}}
\subfloat[ImageNet32-100]{\includegraphics[width=.3333\columnwidth]{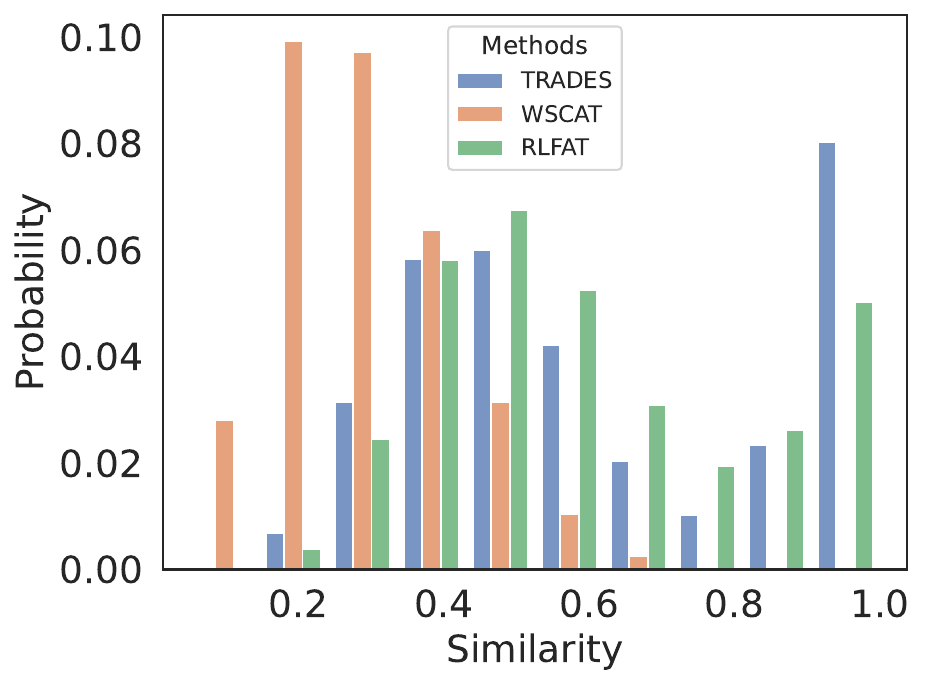} \label{fig-sim-imagenet}}
\caption{Distribution of similarity between the embeddings of natural data and AEs generated by different AT methods. All the AT methods share the same natural dataset. }
\label{Fig-similarity}
\end{figure}

\subsubsection{Ablation Study (RQ3)}

\begin{table}[t]
	\renewcommand{\arraystretch}{}
	\centering
	\caption{The harmonic mean accuracies of WSCAT and its variants. Each harmonic mean is computed over the natural accuracy and the accuracies under FGSM, PGD, CW and AA. }
	\label{Tab-mode}
	\setlength{\tabcolsep}{0.12cm}
	\begin{tabular}{l|c|c|c}
	\toprule
	{Variants}&{CIFAR10}&{CIFAR100}&{ImageNet32-100}\\
	\midrule
	{WSCAT}&{${59.40}_{\pm0.05}$}&{${27.43}_{\pm0.29}$}&{${10.59}_{\pm0.32}$}\\
	{WSCAT-fixed}&{${57.20}_{\pm0.06}$}&{${26.97}_{\pm0.02}$}&{${10.52}_{\pm0.22}$}\\
	{WSCAT-self}&{${57.80}_{\pm0.27}$}&{${26.36}_{\pm0.44}$}&{${10.46}_{\pm0.20}$}\\
	{WSCAT-std}&{${54.88}_{\pm0.03}$}&{${23.83}_{\pm0.05}$}&{${7.09}_{\pm0.27}$}\\
	\bottomrule
	\end{tabular}
\end{table}

\begin{figure}[t]
\centering
\subfloat[WSCAT]{\includegraphics[scale=0.195]{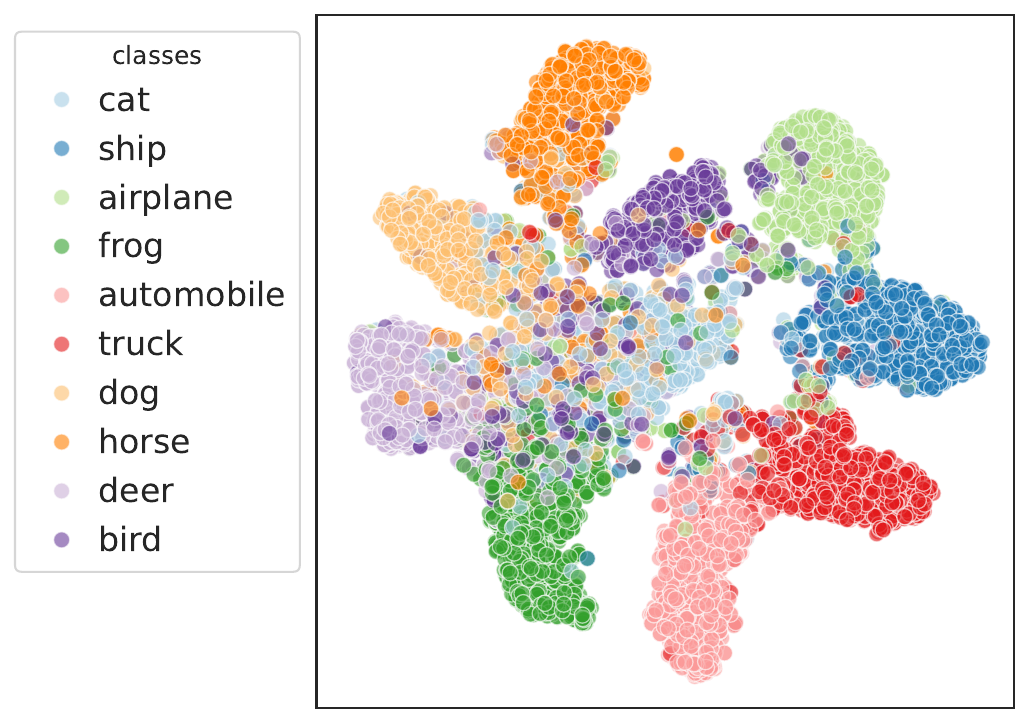} \label{fig-WSCAT}} 
\subfloat[WSCAT-fixed]{\includegraphics[scale=0.195]{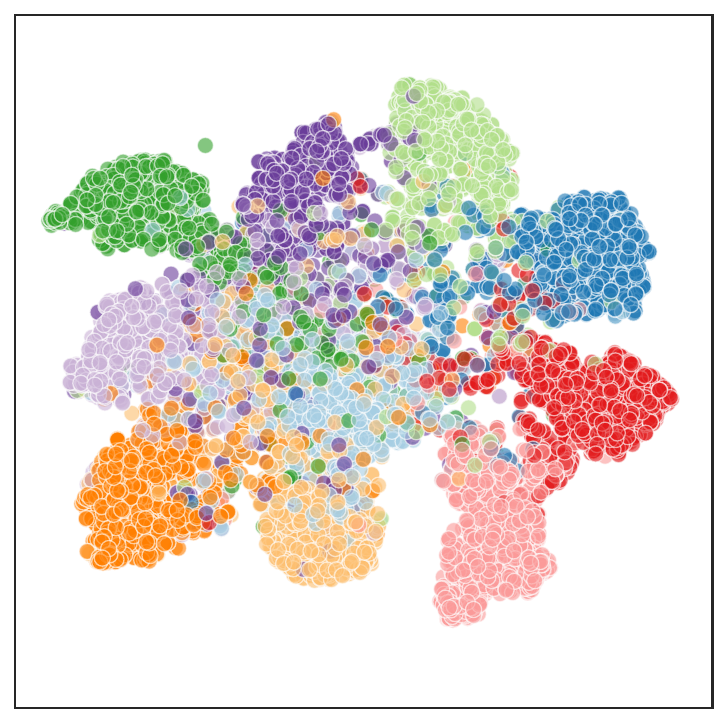} \label{fig-fixed}}
\subfloat[WSCAT-self]{\includegraphics[scale=0.195]{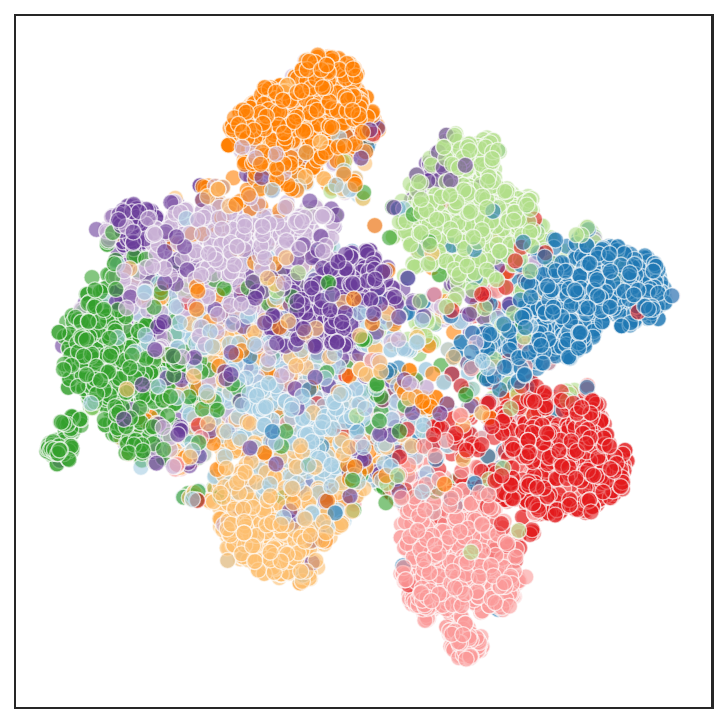} \label{fig-self}}
\caption{Visualization of the embeddings of training samples produced by WSCAT and its variants on CIFAR10. The ground truth labels of the samples are color-coded. }
\label{Fig-feature}
\end{figure}
To further show the superiority of the weakly supervised dynamic loss in \cref{Eq-CON}, we compare WSCAT with two variants WSCAT-fixed (replaces \cref{Eq-CON} by SupCon loss \cite{khosla2020supervised} on pseudo-labeled dataset $\mathcal{D}^*$), and WSCAT-self (replaces \cref{Eq-CON} by Info-NCE loss in \cref{Eq-NCE}), the results of which are reported in \cref{Tab-mode}. The results indicate that WSCAT is superior to the variants since it incorporates the predictions produced by the target model (i.e., weak supervisions), which can dynamically reflect the correlations captured by current model and is independent of pseudo-labels in $\mathcal{D}^*$ that may be noisy, thereby enabling efficient correlation bloacking. We also apply t-SNE algorithm to visualize the feature embeddings on CIFAR10 in \cref{Fig-feature}, from which we can observe that the embeddings generated by WSCAT renders the clusters with higher purity and clearer boundaries than those generated by the two variants (\cref{fig-fixed,fig-self}). 

Additionally, we compare WSCAT with WSCAT-std, which is the variant that generating pseudo-labels by standard training as RST does. Form \cref{Tab-mode}, we can observe that using semi-supervised algorithm MT to produce pseudo-labels of relatively higher quality (than standard training) helps to better learning the robust features, which is because the high-quality pseudo labels help to distill useful features. And compared to the performances of RST in \cref{Tab-RQ1-CIFAR10,Tab-RQ1-CIFAR100,Tab-RQ1-ImageNet32}, one can also see that WSCAT-std beats RST due to the advantages of the complete AE generation. The detailed experiment results are in the supplementary material. 

\subsubsection{Sensitivity Analysis (RQ4)}
We analyze the sensitivity of WSCAT to the weight of the weakly supervised dynamic loss, i.e., $\beta$ in \cref{Eq-AE}, as well as the amount of unlabeled data, while disregarding $\lambda$ since it has been extensively studied in prior works such as \cite{zhang2019theoretically, carmon2019unlabeled}. 

\begin{figure}[t]
\centering
\subfloat[CIFAR10]{\includegraphics[width=.3333\columnwidth]{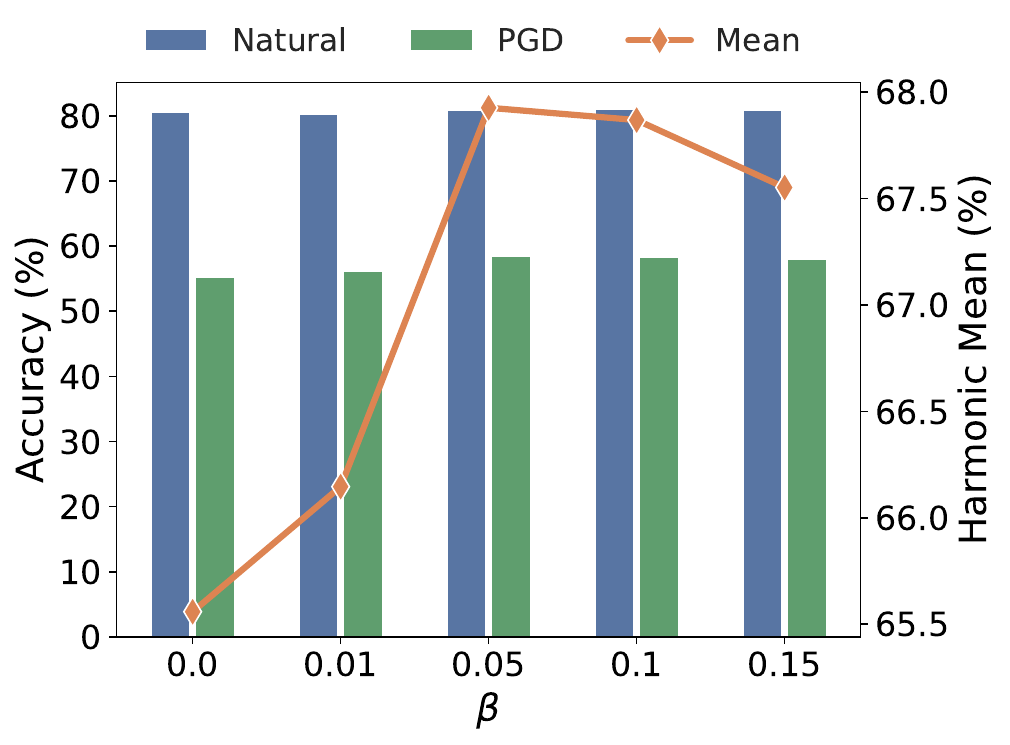} \label{fig-beta-cifar10}} 
\subfloat[CIFAR100]{\includegraphics[width=.3333\columnwidth]{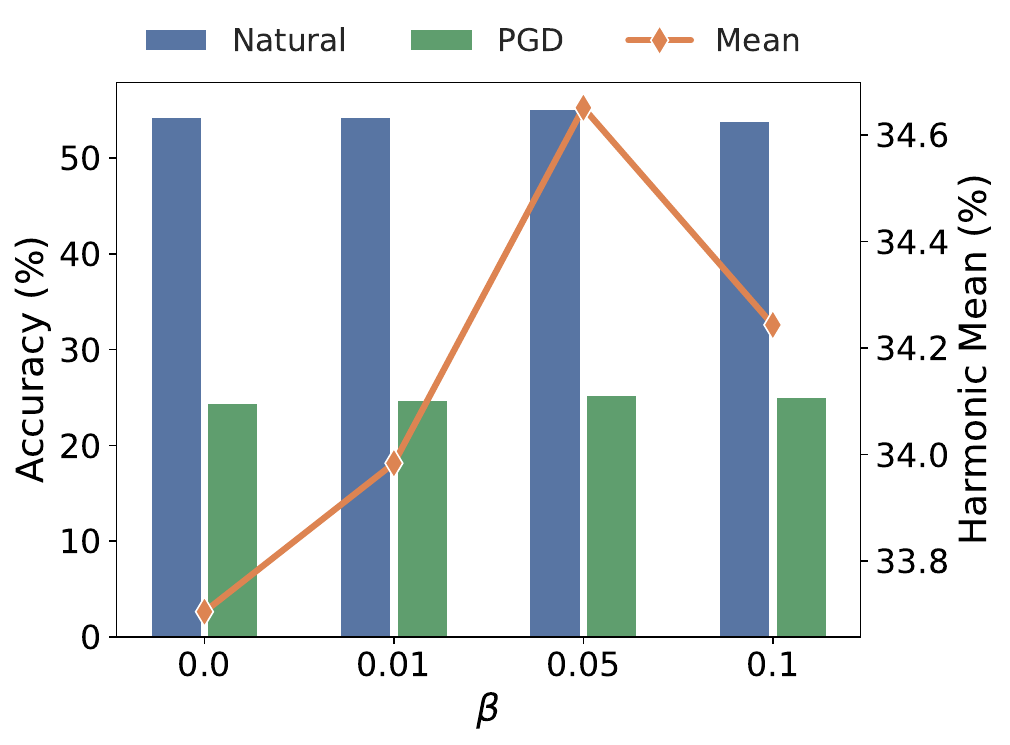} \label{fig-beta-cifar100}}
\subfloat[ImageNet32-100]{\includegraphics[width=.3333\columnwidth]{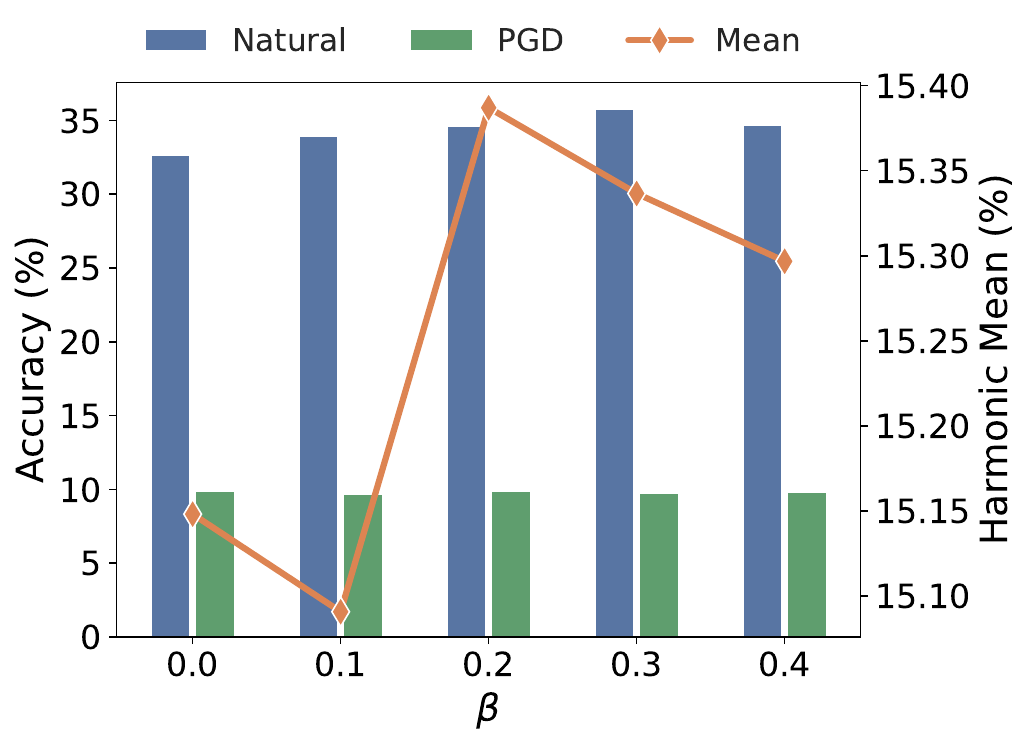} \label{fig-beta-imagenet}}
\caption{Effect of $\beta$. The natural accuracy, robust accuracy against PGD attack are reported with the harmonic mean of them. }
\label{Fig-beta}
\end{figure}

\begin{figure}[t]
\centering
\subfloat[CIFAR10]{\includegraphics[width=.3333\columnwidth]{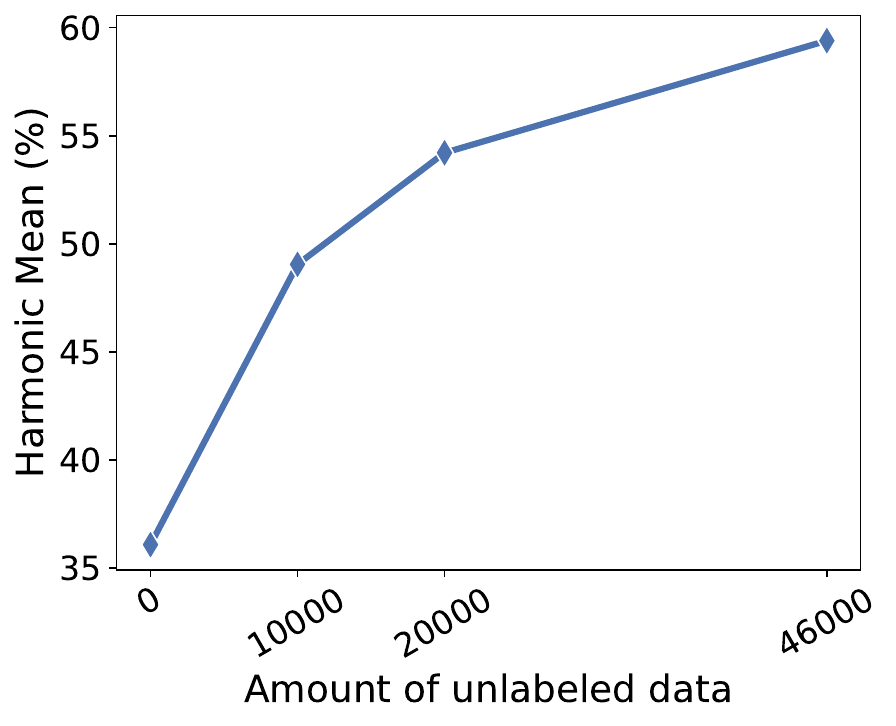} \label{fig-unlabel-cifar10}} 
\subfloat[CIFAR100]{\includegraphics[width=.3333\columnwidth]{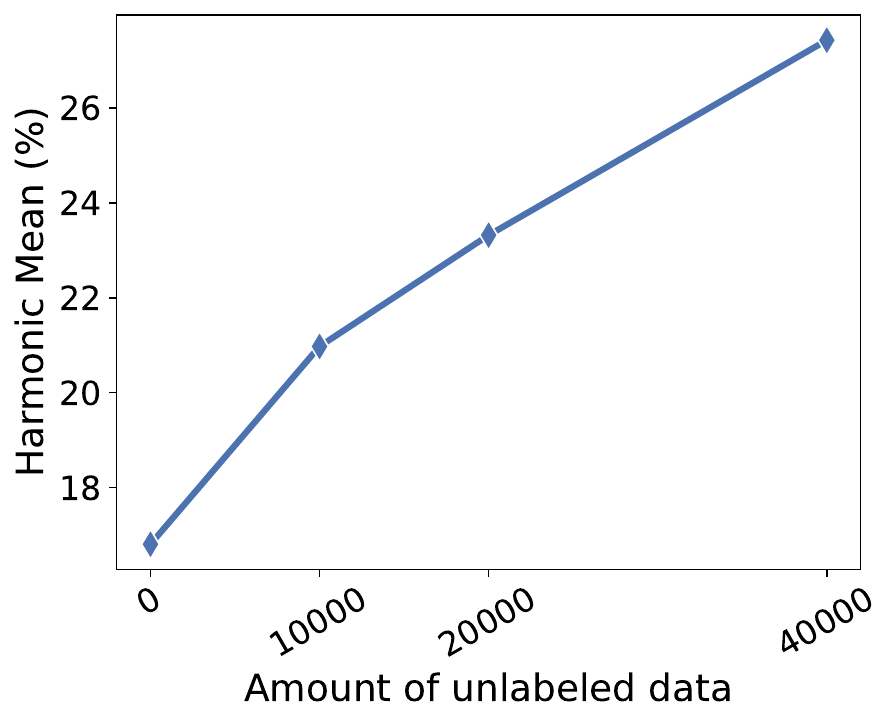} \label{fig-unlabel-cifar100}}
\subfloat[ImageNet32-100]{\includegraphics[width=.3333\columnwidth]{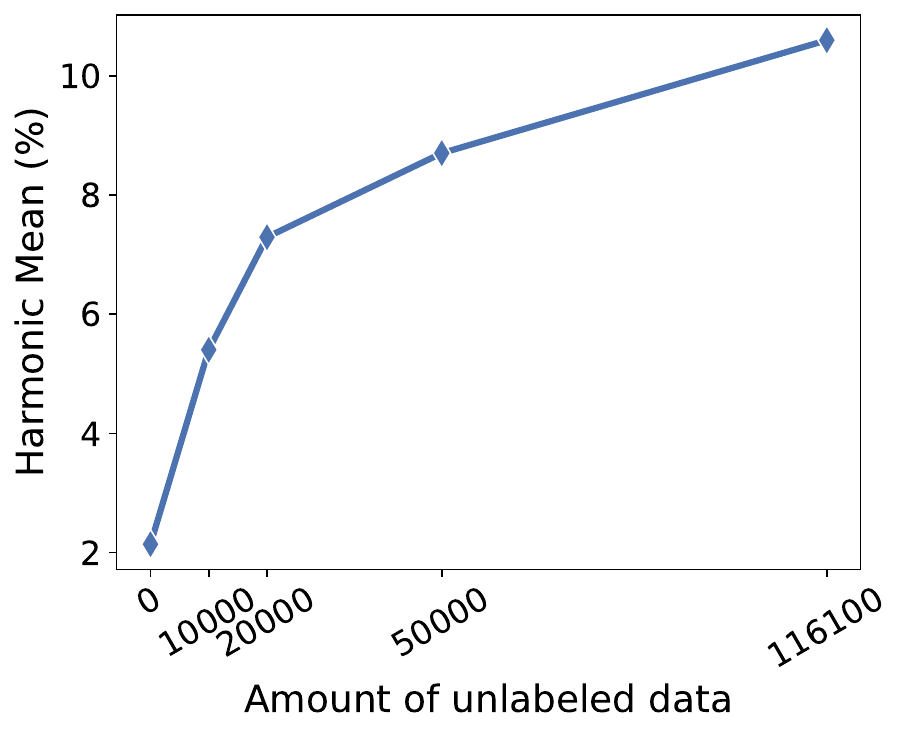} \label{fig-unlabel-imagenet}}
\caption{Effect of the amount of unlabeled data. Each curve is the harmonic mean over natural accuracy and the accuracies under FGSM, PGD, CW and AA. }
\label{Fig-unlabel}
\end{figure}

We first examine the impact of $\beta$ on WSCAT, which also provides insights into the methodology for tuning $\beta$. We run WSCAT with different values of $\beta$ and present the results in \cref{Fig-beta}. As shown, the harmonic mean of natural and robust accuracies initially increases with $\beta$, reaching a peak before subsequently declining. We select the optimal $\beta$ based on this peak value observed on the validation set, which aligns with our theoretical analysis in \cref{sec:moti}. A conservative $\beta$ leaves more non-robust features unperturbed, leading to suboptimal performance. As $\beta$ increases, more non-robust features are allowed to be effectively removed, thereby enabling the target model to learn more robust features. However, an excessively large $\beta$ introduces unintended perturbation of robust features, resulting from that minimizing $\mathrm{MI} (Z^\prime; Z)$ becomes the dominant factor in AE generation. 

Besides, we train WSCAT with different scales of unlabeled data to empirically study influences of unlabeled data, results of which are shown in \cref{Fig-unlabel}. Basically, we see that the more unlabeled data used for training, the better the harmonic mean accuracy. This observation indicates that unlabeled data can indeed help WSCAT learn robust features better, which is consistent with \cref{Th-RF}.

\subsubsection{Training Time (RQ5)}
The epoch times of WSCAT on CIFAR10, CIFAR100 and ImageNet32-100 are $5^{\prime}15^{\prime\prime}$, $5^{\prime}18^{\prime\prime}$ and $13^{\prime}02^{\prime\prime}$, respectively, which are affordable comparing to existing semi-supervised AT (see supplementary material for detail).


\section{Related Work}
\subsection{Semi-supervised Adversarial Training}
Most existing AT methods are supervised and thus suffer from the sparsity of labeled data. Recent research has shown that incorporating a larger pool of unlabeled data can significantly enhance AT performance. For instance, \cite{zhang2019defense} argues that AT performance is impaired when perturbations are generated solely based on labeled data, as they fail to capture the full underlying data distribution. To address this, they propose a semi-supervised AT framework that leverages generative adversarial networks (GANs) to generate AEs while preserving the intrinsic structure of unlabeled data. Inspired by self-training, \cite{carmon2019unlabeled, alayrac2019labels} introduce methods that first assigns pseudo-labels to unlabeled data and then extends AT to a semi-supervised setting using these pseudo-labeled samples. Similarly, \cite{zhang2024provable} jointly trains the target model alongside a G-C-D GAN (for AE generation) on semi-supervised data, improving robustness against various adversarial attacks while mitigating the degradation of standard generalization. However, existing semi-supervised AT methods primarily focus on utilizing unlabeled data to improve overall adversarial robustness but do not explicitly explore how it can be leveraged for learning robust features.


\subsection{Robust Feature Learning}
Since deep learning models have been shown to be vulnerable to adversarial attacks, extensive research has focused on understanding adversarial robustness from the perspective of robust and non-robust features. Recent studies have sought to enhance robustness by redesigning the objectives of traditional AT using various techniques, such as feature disentanglement \cite{yang2021adversarial, yang2021class, wang2024exploring}, multi-view alignment \cite{yang2021structure, zhang2021robust, yu2023improving}, knowledge distillation \cite{kuang2024improving}, and causal inference \cite{zhang2021causaladv, kim2023demystifying}. However, these approaches implicitly assume that AEs are generated by fully perturbing non-robust features. For instance, \cite{zhang2021causaladv} relies on conventional AE generation techniques to modify style variables. In contrast, our work challenges this assumption, arguing that conventional AE generation suffer from incomplete perturbation.

Another line of research focuses on modifying the AE generation. For example, \cite{song2019robust} augments AEs by applying Random Block Shuffle. \cite{xu2022infoat} promotes the generation of more out-of-distribution AEs via point-wise mutual information. Inspired by \cite{kim2021distilling}, which proposes a method to distill robust and non-robust features, \cite{lee2023robust} obtains class-specific robust features using robust proxies. However, despite these advancements, we highlight that the challenge of achieving complete perturbation remains unresolved, as evidenced by the persistent gap between natural and robust accuracy.

\section{Conclusion}
We identify the issue of incomplete perturbation, i.e., not all non-robust features are perturbed in AEs generated by existing adversarial training methods, which hinders the learning of robust features and hence leads to suboptimal adversarial robustness of target classifiers. To tackle this issue, we propose a novel solution called Weakly Supervised Contrastive Adversarial Training (WSCAT), which conducts the learning of robust features by blocking the correlations from non-robust features to labels, via complete AE generation over partially labeled data fulfilled by a novel weakly supervised dynamic loss. The solid theoretical analysis and the extensive experiments demonstrate the superiority of  WSCAT. 


\clearpage
{
    \small
    \bibliographystyle{ieeenat_fullname}
    \bibliography{main}

\begin{thebibliography}{46}
\providecommand{\natexlab}[1]{#1}
\providecommand{\url}[1]{\texttt{#1}}
\expandafter\ifx\csname urlstyle\endcsname\relax
  \providecommand{\doi}[1]{doi: #1}\else
  \providecommand{\doi}{doi: \begingroup \urlstyle{rm}\Url}\fi

\bibitem[Alajaji et~al.(2018)Alajaji, Chen, et~al.]{alajaji2018introduction}
Fady Alajaji, Po-Ning Chen, et~al.
\newblock \emph{An introduction to single-user information theory}.
\newblock Springer, 2018.

\bibitem[Alayrac et~al.(2019)Alayrac, Uesato, Huang, Fawzi, Stanforth, and Kohli]{alayrac2019labels}
Jean-Baptiste Alayrac, Jonathan Uesato, Po-Sen Huang, Alhussein Fawzi, Robert Stanforth, and Pushmeet Kohli.
\newblock Are labels required for improving adversarial robustness?
\newblock In \emph{Advances in Neural Information Processing Systems}, 2019.

\bibitem[Bai et~al.(2021)Bai, Luo, Zhao, Wen, and Wang]{bai2021recent}
Tao Bai, Jinqi Luo, Jun Zhao, Bihan Wen, and Qian Wang.
\newblock Recent advances in adversarial training for adversarial robustness.
\newblock In \emph{Proceedings of the Thirtieth International Joint Conference on Artificial Intelligence (IJCAI-21)}, 2021.

\bibitem[Biggio et~al.(2013)Biggio, Corona, Maiorca, Nelson, {\v{S}}rndi{\'c}, Laskov, Giacinto, and Roli]{biggio2013evasion}
Battista Biggio, Igino Corona, Davide Maiorca, Blaine Nelson, Nedim {\v{S}}rndi{\'c}, Pavel Laskov, Giorgio Giacinto, and Fabio Roli.
\newblock Evasion attacks against machine learning at test time.
\newblock In \emph{Joint European Conference on Machine Learning and Knowledge Discovery in Databases}, pages 387--402, 2013.

\bibitem[Cai et~al.(2024)Cai, Zhu, Qiao, Liang, Liu, and Hao]{cai2024and}
Ruichu Cai, Yuxuan Zhu, Jie Qiao, Zefeng Liang, Furui Liu, and Zhifeng Hao.
\newblock Where and how to attack? a causality-inspired recipe for generating counterfactual adversarial examples.
\newblock In \emph{Proceedings of the AAAI Conference on Artificial Intelligence}, pages 11132--11140, 2024.

\bibitem[Carlini and Wagner(2017)]{carlini2017towards}
Nicholas Carlini and David Wagner.
\newblock Towards evaluating the robustness of neural networks.
\newblock In \emph{2017 ieee symposium on security and privacy (sp)}, pages 39--57, 2017.

\bibitem[Carmon et~al.(2019)Carmon, Raghunathan, Schmidt, Duchi, and Liang]{carmon2019unlabeled}
Yair Carmon, Aditi Raghunathan, Ludwig Schmidt, John~C Duchi, and Percy~S Liang.
\newblock Unlabeled data improves adversarial robustness.
\newblock In \emph{Advances in Neural Information Processing Systems}, 2019.

\bibitem[Chrabaszcz et~al.(2017)Chrabaszcz, Loshchilov, and Hutter]{chrabaszcz2017downsampled}
Patryk Chrabaszcz, Ilya Loshchilov, and Frank Hutter.
\newblock A downsampled variant of imagenet as an alternative to the cifar datasets.
\newblock \emph{arXiv preprint arXiv:1707.08819}, 2017.

\bibitem[Croce and Hein(2020)]{croce2020reliable}
Francesco Croce and Matthias Hein.
\newblock Reliable evaluation of adversarial robustness with an ensemble of diverse parameter-free attacks.
\newblock In \emph{International conference on machine learning}, pages 2206--2216. PMLR, 2020.

\bibitem[Goodfellow et~al.(2014)Goodfellow, Shlens, and Szegedy]{goodfellow2014explaining}
Ian~J Goodfellow, Jonathon Shlens, and Christian Szegedy.
\newblock Explaining and harnessing adversarial examples.
\newblock In \emph{International Conference on Learning Representations}, 2014.

\bibitem[Hoeffding(1994)]{hoeffding1994probability}
Wassily Hoeffding.
\newblock Probability inequalities for sums of bounded random variables.
\newblock \emph{The collected works of Wassily Hoeffding}, pages 409--426, 1994.

\bibitem[Ilyas et~al.(2019)Ilyas, Santurkar, Tsipras, Engstrom, Tran, and Madry]{ilyas2019adversarial}
Andrew Ilyas, Shibani Santurkar, Dimitris Tsipras, Logan Engstrom, Brandon Tran, and Aleksander Madry.
\newblock Adversarial examples are not bugs, they are features.
\newblock \emph{Advances in Neural Information Processing Systems}, 32, 2019.

\bibitem[Khosla et~al.(2020)Khosla, Teterwak, Wang, Sarna, Tian, Isola, Maschinot, Liu, and Krishnan]{khosla2020supervised}
Prannay Khosla, Piotr Teterwak, Chen Wang, Aaron Sarna, Yonglong Tian, Phillip Isola, Aaron Maschinot, Ce Liu, and Dilip Krishnan.
\newblock Supervised contrastive learning.
\newblock \emph{Advances in neural information processing systems}, 33:\penalty0 18661--18673, 2020.

\bibitem[Kim et~al.(2021)Kim, Lee, and Ro]{kim2021distilling}
Junho Kim, Byung-Kwan Lee, and Yong~Man Ro.
\newblock Distilling robust and non-robust features in adversarial examples by information bottleneck.
\newblock \emph{Advances in Neural Information Processing Systems}, 34:\penalty0 17148--17159, 2021.

\bibitem[Kim et~al.(2023)Kim, Lee, and Ro]{kim2023demystifying}
Junho Kim, Byung-Kwan Lee, and Yong~Man Ro.
\newblock Demystifying causal features on adversarial examples and causal inoculation for robust network by adversarial instrumental variable regression.
\newblock In \emph{Proceedings of the IEEE/CVF Conference on Computer Vision and Pattern Recognition}, pages 12302--12312, 2023.

\bibitem[Krizhevsky(2009)]{krizhevsky2009learning}
Alex Krizhevsky.
\newblock Learning multiple layers of features from tiny images.
\newblock \emph{Citeseer}, 2009.

\bibitem[Kuang et~al.(2024)Kuang, Liu, Wu, Satoh, and Ji]{kuang2024improving}
Huafeng Kuang, Hong Liu, YongJian Wu, Shin'ichi Satoh, and Rongrong Ji.
\newblock Improving adversarial robustness via information bottleneck distillation.
\newblock \emph{Advances in Neural Information Processing Systems}, 36, 2024.

\bibitem[Lee and Ro(2023)]{lee2023robust}
Hong~Joo Lee and Yong~Man Ro.
\newblock Robust proxy: Improving adversarial robustness by robust proxy learning.
\newblock \emph{IEEE Transactions on Information Forensics and Security}, 18:\penalty0 4021--4033, 2023.

\bibitem[Li et~al.(2024)Li, Hu, Dong, Zhang, and Chen]{li2024focus}
Qian Li, Yuxiao Hu, Yinpeng Dong, Dongxiao Zhang, and Yuntian Chen.
\newblock Focus on hiders: Exploring hidden threats for enhancing adversarial training.
\newblock In \emph{Proceedings of the IEEE/CVF Conference on Computer Vision and Pattern Recognition}, pages 24442--24451, 2024.

\bibitem[Madry et~al.(2018)Madry, Makelov, Schmidt, Tsipras, and Vladu]{madry2017towards}
Aleksander Madry, Aleksandar Makelov, Ludwig Schmidt, Dimitris Tsipras, and Adrian Vladu.
\newblock Towards deep learning models resistant to adversarial attacks.
\newblock In \emph{International Conference on Learning Representations}, 2018.

\bibitem[Miller et~al.(2020)Miller, Xiang, and Kesidis]{miller2020adversarial}
David~J Miller, Zhen Xiang, and George Kesidis.
\newblock Adversarial learning targeting deep neural network classification: A comprehensive review of defenses against attacks.
\newblock \emph{Proceedings of the IEEE}, 108\penalty0 (3):\penalty0 402--433, 2020.

\bibitem[Nesterov(1983)]{nesterov1983method}
Yu~E Nesterov.
\newblock A method for solving the convex programming problem with convergence rate $o(1 / k^2)$.
\newblock In \emph{Dokl. akad. nauk Sssr}, pages 543--547, 1983.

\bibitem[Poole et~al.(2019)Poole, Ozair, Van Den~Oord, Alemi, and Tucker]{poole2019variational}
Ben Poole, Sherjil Ozair, Aaron Van Den~Oord, Alex Alemi, and George Tucker.
\newblock On variational bounds of mutual information.
\newblock In \emph{International Conference on Machine Learning}, pages 5171--5180. PMLR, 2019.

\bibitem[Rice et~al.(2020)Rice, Wong, and Kolter]{rice2020overfitting}
Leslie Rice, Eric Wong, and Zico Kolter.
\newblock Overfitting in adversarially robust deep learning.
\newblock In \emph{International Conference on Machine Learning}, pages 8093--8104. PMLR, 2020.

\bibitem[Smith and Topin(2019)]{smith2019super}
Leslie~N Smith and Nicholay Topin.
\newblock Super-convergence: Very fast training of neural networks using large learning rates.
\newblock In \emph{Artificial intelligence and machine learning for multi-domain operations applications}, pages 369--386. SPIE, 2019.

\bibitem[Song et~al.(2019)Song, He, Lin, Wang, and Hopcroft]{song2019robust}
Chuanbiao Song, Kun He, Jiadong Lin, Liwei Wang, and John~E Hopcroft.
\newblock Robust local features for improving the generalization of adversarial training.
\newblock \emph{arXiv preprint arXiv:1909.10147}, 2019.

\bibitem[Staib and Jegelka(2017)]{staib2017distributionally}
Matthew Staib and Stefanie Jegelka.
\newblock Distributionally robust deep learning as a generalization of adversarial training.
\newblock In \emph{NIPS workshop on Machine Learning and Computer Security}, 2017.

\bibitem[Szegedy et~al.(2014)Szegedy, Zaremba, Sutskever, Bruna, Erhan, Goodfellow, and Fergus]{szegedy2013intriguing}
Christian Szegedy, Wojciech Zaremba, Ilya Sutskever, Joan Bruna, Dumitru Erhan, Ian Goodfellow, and Rob Fergus.
\newblock Intriguing properties of neural networks.
\newblock In \emph{International Conference on Learning Representations}, 2014.

\bibitem[Tarvainen and Valpola(2017)]{tarvainen2017mean}
Antti Tarvainen and Harri Valpola.
\newblock Mean teachers are better role models: Weight-averaged consistency targets improve semi-supervised deep learning results.
\newblock \emph{Advances in neural information processing systems}, 30, 2017.

\bibitem[Villani et~al.(2009)]{villani2009optimal}
C{\'e}dric Villani et~al.
\newblock \emph{Optimal transport: old and new}.
\newblock Springer, 2009.

\bibitem[Wang et~al.(2024{\natexlab{a}})Wang, Deng, Yoo, and Lin]{wang2024exploring}
Hong Wang, Yuefan Deng, Shinjae Yoo, and Yuewei Lin.
\newblock Exploring robust features for improving adversarial robustness.
\newblock \emph{IEEE Transactions on Cybernetics}, 2024{\natexlab{a}}.

\bibitem[Wang et~al.(2023)Wang, Pang, Du, Lin, Liu, and Yan]{wang2023better}
Zekai Wang, Tianyu Pang, Chao Du, Min Lin, Weiwei Liu, and Shuicheng Yan.
\newblock Better diffusion models further improve adversarial training.
\newblock \emph{arXiv preprint arXiv:2302.04638}, 2023.

\bibitem[Wang et~al.(2024{\natexlab{b}})Wang, Li, Zhu, and Xie]{wang2024revisiting}
Zeyu Wang, Xianhang Li, Hongru Zhu, and Cihang Xie.
\newblock Revisiting adversarial training at scale.
\newblock In \emph{Proceedings of the IEEE/CVF Conference on Computer Vision and Pattern Recognition}, pages 24675--24685, 2024{\natexlab{b}}.

\bibitem[Wei et~al.(2020)Wei, Shen, Chen, and Ma]{wei2020theoretical}
Colin Wei, Kendrick Shen, Yining Chen, and Tengyu Ma.
\newblock Theoretical analysis of self-training with deep networks on unlabeled data.
\newblock \emph{arXiv preprint arXiv:2010.03622}, 2020.

\bibitem[Xu et~al.(2022)Xu, Zhang, Li, and Zhang]{xu2022infoat}
Mengting Xu, Tao Zhang, Zhongnian Li, and Daoqiang Zhang.
\newblock Infoat: Improving adversarial training using the information bottleneck principle.
\newblock \emph{IEEE Transactions on Neural Networks and Learning Systems}, 35\penalty0 (1):\penalty0 1255--1264, 2022.

\bibitem[Yang et~al.(2021{\natexlab{a}})Yang, Zhou, Zhang, Tian, and Tao]{yang2021class}
Kaiwen Yang, Tianyi Zhou, Yonggang Zhang, Xinmei Tian, and Dacheng Tao.
\newblock Class-disentanglement and applications in adversarial detection and defense.
\newblock \emph{Advances in Neural Information Processing Systems}, 34:\penalty0 16051--16063, 2021{\natexlab{a}}.

\bibitem[Yang et~al.(2021{\natexlab{b}})Yang, Feng, Du, Du, and Xu]{yang2021structure}
Shuo Yang, Zeyu Feng, Pei Du, Bo Du, and Chang Xu.
\newblock Structure-aware stabilization of adversarial robustness with massive contrastive adversaries.
\newblock In \emph{2021 IEEE International Conference on Data Mining (ICDM)}, pages 807--816. IEEE, 2021{\natexlab{b}}.

\bibitem[Yang et~al.(2021{\natexlab{c}})Yang, Guo, Wang, and Xu]{yang2021adversarial}
Shuo Yang, Tianyu Guo, Yunhe Wang, and Chang Xu.
\newblock Adversarial robustness through disentangled representations.
\newblock In \emph{Proceedings of the AAAI Conference on Artificial Intelligence}, 2021{\natexlab{c}}.

\bibitem[Yu et~al.(2023)Yu, Smedemark-Margulies, Aeron, Koike-Akino, Moulin, Brand, Parsons, and Wang]{yu2023improving}
Xi Yu, Niklas Smedemark-Margulies, Shuchin Aeron, Toshiaki Koike-Akino, Pierre Moulin, Matthew Brand, Kieran Parsons, and Ye Wang.
\newblock Improving adversarial robustness by learning shared information.
\newblock \emph{Pattern Recognition}, 134:\penalty0 109054, 2023.

\bibitem[Zagoruyko and Komodakis(2016)]{zagoruyko2016wide}
S Zagoruyko and N Komodakis.
\newblock Wide residual networks. british machine vision conference (bmvc), 2016.

\bibitem[Zhang and Wang(2019)]{zhang2019defense}
Haichao Zhang and Jianyu Wang.
\newblock Defense against adversarial attacks using feature scattering-based adversarial training.
\newblock In \emph{Advances in Neural Information Processing Systems}, 2019.

\bibitem[Zhang et~al.(2019)Zhang, Yu, Jiao, Xing, El~Ghaoui, and Jordan]{zhang2019theoretically}
Hongyang Zhang, Yaodong Yu, Jiantao Jiao, Eric Xing, Laurent El~Ghaoui, and Michael Jordan.
\newblock Theoretically principled trade-off between robustness and accuracy.
\newblock In \emph{International Conference on Machine Learning}, pages 7472--7482. PMLR, 2019.

\bibitem[Zhang et~al.(2024)Zhang, Yang, Sun, and Philip]{zhang2024provable}
Lilin Zhang, Ning Yang, Yanchao Sun, and S~Yu Philip.
\newblock Provable unrestricted adversarial training without compromise with generalizability.
\newblock \emph{IEEE Transactions on Pattern Analysis and Machine Intelligence}, 2024.

\bibitem[Zhang et~al.(2021{\natexlab{a}})Zhang, Wang, Wang, Jiang, Xu, and Zhao]{zhang2021robust}
Xiaoqin Zhang, Jinxin Wang, Tao Wang, Runhua Jiang, Jiawei Xu, and Li Zhao.
\newblock Robust feature learning for adversarial defense via hierarchical feature alignment.
\newblock \emph{Information Sciences}, 560:\penalty0 256--270, 2021{\natexlab{a}}.

\bibitem[Zhang et~al.(2021{\natexlab{b}})Zhang, Gong, Liu, Niu, Tian, Han, Sch{\"o}lkopf, and Zhang]{zhang2021causaladv}
Yonggang Zhang, Mingming Gong, Tongliang Liu, Gang Niu, Xinmei Tian, Bo Han, Bernhard Sch{\"o}lkopf, and Kun Zhang.
\newblock Causaladv: Adversarial robustness through the lens of causality.
\newblock \emph{arXiv preprint arXiv:2106.06196}, 2021{\natexlab{b}}.

\bibitem[Zhao et~al.(2022)Zhao, Alwidian, and Mahmoud]{zhao2022adversarial}
Weimin Zhao, Sanaa Alwidian, and Qusay~H Mahmoud.
\newblock Adversarial training methods for deep learning: A systematic review.
\newblock \emph{Algorithms}, 15\penalty0 (8):\penalty0 283, 2022.

\end{thebibliography}
}

\clearpage
\appendix
\maketitlesupplementary

\section{Proofs of Theorems}\label{app:proof}
\subsection{Useful Lemmas}

\begin{lemma}\label{Le-metric}
Let $z = f(x)$, $z_1 = f(x_1)$ and $\bar{s} (z) = - \frac{1}{\vert \mathcal{N}^+_x \vert } $ $\sum_{x_p \in \mathcal{N}^+_x} \log \frac{\exp s(z, z_p) }{ \sum_{x_n \in \mathcal{D}} \exp{ s(z, z_n)} }$. If $\Delta ( z, z_1 )$$ = $$\vert \bar{s} (z) - \bar{s} (z_1) \vert $ is valid when $x_1 \in \mathcal{B}_{\epsilon}(x)$, $ \Delta( \cdot, \cdot )$ is a distance metric. 
\end{lemma}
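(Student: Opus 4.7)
The plan is to verify the four metric axioms (non-negativity, symmetry, identity of indiscernibles, and triangle inequality) for $\Delta$, exploiting the fact that $\Delta$ is defined as the pullback of the standard Euclidean distance on $\mathbb{R}$ through the scalar-valued map $\bar{s}(\cdot)$. Concretely, writing $\Delta(z, z_1) = |\bar{s}(z) - \bar{s}(z_1)| = d_{\mathbb{R}}(\bar{s}(z), \bar{s}(z_1))$, where $d_{\mathbb{R}}$ is the ordinary absolute-value metric on $\mathbb{R}$, it suffices to transport the metric axioms of $d_{\mathbb{R}}$ back to the embedding space, while separately handling the identity-of-indiscernibles axiom, which is the only place the restriction $x_1 \in \mathcal{B}_{\epsilon}(x)$ plays a nontrivial role.

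First, I would dispatch non-negativity, symmetry, and the triangle inequality in a single short step. Non-negativity is immediate from $|\cdot| \ge 0$; symmetry follows from $|a - b| = |b - a|$ applied to the scalars $a = \bar{s}(z)$, $b = \bar{s}(z_1)$; the triangle inequality $\Delta(z, z_2) \le \Delta(z, z_1) + \Delta(z_1, z_2)$ follows by substituting $a = \bar{s}(z)$, $c = \bar{s}(z_2)$, $b = \bar{s}(z_1)$ into $|a - c| \le |a - b| + |b - c|$. None of these three properties depend on any structure of $\bar{s}$ beyond being a well-defined scalar-valued function, which it manifestly is since the denominator $\sum_{x_n \in \mathcal{D}} \exp s(z, z_n)$ is strictly positive.

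The remaining and only delicate axiom is identity of indiscernibles: $\Delta(z, z_1) = 0 \iff z = z_1$. The forward direction $z = z_1 \Rightarrow \Delta = 0$ is trivial, since $\bar{s}$ is a deterministic function of its argument. For the reverse direction, $\Delta(z, z_1) = 0$ forces $\bar{s}(z) = \bar{s}(z_1)$, and to conclude $z = z_1$ I would invoke the injectivity of $\bar{s}$ on the local neighborhood dictated by the hypothesis $x_1 \in \mathcal{B}_{\epsilon}(x)$. The idea is that inside a small $\epsilon$-ball the map $x \mapsto \bar{s}(f(x))$ behaves monotonically with respect to the contrastive structure induced by the positive set $\mathcal{N}_x^+$ (when $z_1 = z$, every summand equals its value at $z$; any nontrivial movement of $z_1$ away from $z$ perturbs at least one of the $\log\frac{\exp s(z_1, z_p)}{\sum_n \exp s(z_1, z_n)}$ terms), so the equality $\bar{s}(z) = \bar{s}(z_1)$ under the locality constraint forces $z_1 = z$.

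The main obstacle I expect is precisely this identity-of-indiscernibles step: in general $\bar{s}$ need not be globally injective, which is why the lemma is restricted to the regime $x_1 \in \mathcal{B}_{\epsilon}(x)$. I would therefore spend most of the proof carefully arguing local injectivity, either by a direct monotonicity calculation on the softmax-style quantities appearing in $\bar{s}$ or, if a weaker claim suffices for the downstream use in Theorem~\ref{Th-RF}, by explicitly noting that $\Delta$ is at worst a pseudometric that can be promoted to a metric by quotienting out the equivalence $z \sim z_1 \iff \bar{s}(z) = \bar{s}(z_1)$; in the $\epsilon$-ball considered by the AT setting the equivalence class of $z$ is a singleton, so $\Delta$ is a bona fide metric there.
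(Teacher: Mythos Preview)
Your pullback argument for non-negativity, symmetry, and the triangle inequality is correct and in fact cleaner than the paper's: the paper proves the triangle inequality through an explicit chain of absolute-value manipulations (adding and subtracting $\Delta(z_1,z_2)$, applying $|a|+|b|\ge|a+b|$ and then the reverse triangle inequality), whereas you recognize immediately that $\Delta$ is the pullback of $d_{\mathbb R}$ along $\bar s$, which gives all three properties in one line.

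Where you diverge from the paper is on identity of indiscernibles. The paper's proof simply does not address the implication $\Delta(z,z_1)=0\Rightarrow z=z_1$; it only records $\Delta(z,z)=0$, symmetry, non-negativity, and the triangle inequality, and then declares $\Delta$ a distance metric. Your instinct that this axiom is the hard one is right, but your proposed local-injectivity argument for $\bar s$ is not justified: nothing in the setup prevents two distinct embeddings inside $\mathcal{B}_\epsilon(x)$ from producing the same averaged log-softmax score, and the ``monotonicity'' heuristic you sketch does not hold for a scalar summary of a high-dimensional similarity profile. Your fallback observation is the honest way through: what is actually established (by both you and the paper) is that $\Delta$ is a pseudometric, and that is all Definition~\ref{Def_robust_feature} and Theorem~\ref{Th-RF} require, since $\gamma_\Delta$ only involves suprema and expectations of $\Delta$ and never needs separation of points. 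So drop the injectivity attempt and keep the pseudometric remark; that matches what the paper effectively proves.
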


\begin{proof}[Proof of \cref{Le-metric}]
It is obvious that for any $x$, $ \Delta( z, z) = 0 $. And for any $x$ and $x_1 \in \mathcal{B}_\epsilon (x)$ the symmetry and non-negativity are clearly satisfied by $\Delta (z, z_1)$. We only need to justify that $ \Delta( \cdot, \cdot )$ satisfies the triangle inequality. For any $x$, $x_1$ and $x_2$, since we have
\begin{equation*}
\begin{aligned}
& \Delta (z, z_1) + \Delta (z, z_2) \\
= & \Delta (z_1, z_2) + \Delta (z, z_2) - \Delta (z_1, z_2) + \Delta (z, z_1)\\
= & \vert \bar{s} (z_1) - \bar{s} (z_2) \vert + \vert \bar{s} (z) - \bar{s} (z_2) \vert - \Delta (z_1, z_2) + \Delta (z, z_1)\\
\ge & \vert \bar{s} (z_1) - 2 \bar{s} (z_2) + \bar{s} (z) \vert - \Delta (z_1, z_2) + \Delta (z, z_1)\\ 
= &  \vert  2\bar{s} (z_1) - 2 \bar{s} (z_2) + \bar{s} (z) - \bar{s} (z_1) \vert - \Delta (z_1, z_2)+\Delta (z, z_1)\\ 
\ge & \vert  2\bar{s} (z_1) - 2 \bar{s} (z_2)\vert - \vert \bar{s} (z) - \bar{s} (z_1) \vert - \Delta (z_1, z_2)+\Delta (z, z_1) \\
= & 2 \Delta(z_1, z_2) - \Delta (z, z_1) - \Delta (z_1, z_2)+\Delta (z, z_1) \\
= &\Delta (z_1, z_2), 
\end{aligned}
\end{equation*}
i.e, the triangle inequality $\Delta (z, z_1) + \Delta (z, z_2) \ge \Delta (z_1, z_2)$ always holds. Therefore, $ \Delta( \cdot, \cdot )$ is a distance metric. 
\end{proof}

\begin{lemma}\label{Le-metric2}
$\vert \l_\mathrm{con}(z^\prime, z) - l_\mathrm{con}(z, z) \vert = \Delta( z^\prime, z )$
\end{lemma}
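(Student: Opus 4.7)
The plan is to reduce both $l_\mathrm{con}(z^\prime,z)$ and $l_\mathrm{con}(z,z)$ to the auxiliary quantity $\bar{s}(\cdot)$ introduced in Lemma 1, after which the claim follows immediately from the definition of $\Delta$. This is essentially a bookkeeping argument rather than a substantive inequality.

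First, I would evaluate $l_\mathrm{con}(z,z)$ by substituting $x^\prime=x$ (hence $z^\prime=z$) into Eq. (7). The numerator inside the logarithm becomes $e^{s(z,z_p)}$ and the denominator becomes $\sum_{x_n\in\mathcal{D}}e^{s(z,z_n)}$, which matches the definition of $\bar{s}(z)$ in Lemma 1 termwise; hence $l_\mathrm{con}(z,z)=\bar{s}(z)$.

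Next, I would handle $l_\mathrm{con}(z^\prime,z)$. By Eq. (7), the positive index set $\mathcal{N}^+_x$ depends only on the second argument (through $x$), while the similarity scores depend only on the first argument, so
$$l_\mathrm{con}(z^\prime,z)=-\frac{1}{\vert\mathcal{N}^+_x\vert}\sum_{x_p\in\mathcal{N}^+_x}\log\frac{e^{s(z^\prime,z_p)}}{\sum_{x_n\in\mathcal{D}}e^{s(z^\prime,z_n)}}.$$
This coincides with $\bar{s}(z^\prime)$ provided $\bar{s}(z^\prime)$ is read with the positive set $\mathcal{N}^+_x$ inherited from $x$. That identification is exactly what Lemma 1 tacitly uses when asserting $\Delta(z,z_1)=\vert\bar{s}(z)-\bar{s}(z_1)\vert$ for $x_1\in\mathcal{B}_\epsilon(x)$, and it is consistent with the adversarial-training regime in which $\text{argmax}_y C(x^\prime)=\text{argmax}_y C(x)$, which by the definition of $\mathcal{N}^+$ stated after Eq. (7) forces $\mathcal{N}^+_{x^\prime}=\mathcal{N}^+_x$.

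Combining these two identifications with the defining equality $\Delta(z^\prime,z)=\vert\bar{s}(z^\prime)-\bar{s}(z)\vert$ gives $\vert l_\mathrm{con}(z^\prime,z)-l_\mathrm{con}(z,z)\vert=\vert\bar{s}(z^\prime)-\bar{s}(z)\vert=\Delta(z^\prime,z)$, which is the claim. The only subtlety worth flagging is the positive-set alignment just discussed; once that is pinned down, the proof reduces to a one-line computation and there is no genuine analytic obstacle to overcome.
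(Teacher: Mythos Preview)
Your proposal is correct and follows essentially the same route as the paper: both arguments expand $l_\mathrm{con}(z^\prime,z)$ and $l_\mathrm{con}(z,z)$ via Eq.~(7), identify each with $\bar{s}(z^\prime)$ and $\bar{s}(z)$ respectively, and conclude by the definition of $\Delta$. The paper's proof is terser and does not explicitly flag the positive-set alignment you noted, but the underlying computation is identical.
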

\begin{proof}[Proof of \cref{Le-metric2}]
According to \cref{Eq-CON}, 
\begin{equation*}
\begin{aligned}
& \l_\mathrm{con}(z^\prime, z) - l_\mathrm{con}(z, z) \\ 
= & - \frac{1}{\vert \mathcal{N}^+_x \vert } \sum_{x_p \in \mathcal{N}^+_x} \log{ \frac{\exp{( s(z^\prime, z_p))} }{ \sum_{x_n \in \mathcal{D}^* } \exp{(s(z^\prime, z_n))} } }\\
& + \frac{1}{\vert \mathcal{N}^+_x \vert } \sum_{x_p \in \mathcal{N}^+_x} \log{ \frac{\exp{( s(z, z_p))} }{ \sum_{x_n \in \mathcal{D}^* } \exp{(s(z, z_n))} } }  \\
= & \bar{s}(z^\prime) - \bar{s}(z). 
\end{aligned}
\end{equation*}
Therefore, $\vert \l_\mathrm{con}(z, z) - l_\mathrm{con}(z^\prime, z) \vert =  \Delta( z^\prime, z )$. 
\end{proof}

\subsection{Detailed Proofs}

\begin{proof}[Proof of \cref{Th-CB}]
By the LogSumExp operation, i.e., $\log (e^{x_1} + e^{x_2} + ... + e^{x_n})$ $ \approx$ $ \max$ $ \{x_1, x_2, ..., x_n\} $, we can transform the contrastive loss in \cref{Eq-CON} to 
\begin{equation*} 
\begin{aligned} 
& l_\mathrm{con} (z^\prime, z) \\
= &  \frac{1}{ \vert \mathcal{N}^+_x \vert } \sum_{x_p \in \mathcal{N}^+_x} \log{ \frac{ \sum_{x_n \in \mathcal{D}^* } e^{s(z^\prime, z_n)} }{e^{ s(z^\prime, z_p)} } }\\
= &  \frac{1}{ \vert \mathcal{N}^+_x \vert } \sum_{x_p \in \mathcal{N}^+_x} \log \Big( e^0 + \sum_{x_n \in \mathcal{D}, x_n \ne x_p} e^{s(z^\prime, z_n) - s(z^\prime, z_p)} \Big)\\
\approx &  \frac{1}{ \vert \mathcal{N}^+_x \vert } \sum_{x_p \in \mathcal{N}^+_x}  \max \big\{ 0, \big\{ s(z^\prime, z_n) - s(z^\prime, z_p) \big\}_{x_n \in \mathcal{D}, x_n \ne x_p} \big\}, 
\end{aligned}
\end{equation*}
from which we can see that maximizing $l_\mathrm{con} (z^\prime, z)$ is approximately maximizing the last line. 
\end{proof}

\begin{proof}[Proof of \cref{Th-RF}]\label{proof-th-rf}

1) According to Hoeffding's Inequality \cite{hoeffding1994probability}, with probability at least $1-\delta$ the following inequality holds: 
\begin{equation*}
\begin{aligned}
& \mathrm{E}_{P_{X, Y}} [ l_\mathrm{nat} (X, Y) ] - \frac{1}{\vert \mathcal{D}^* \vert} \sum_{(x, y)  \in \mathcal{D}^*} l_\mathrm{nat} (x, y) \\
\le & \mathrm{E}_{P_{X, Y}} [ l_\mathrm{nat} (X, Y) ] - \frac{1}{\vert \mathcal{D}_l \vert} \sum_{(x, y)  \in \mathcal{D}_l } l_\mathrm{nat} (x, y) \\
\le & l_{\mathrm{m}} \sqrt{\frac{ \log{\frac{1}{\delta}}}{2\vert \mathcal{D}_l \vert}},
\end{aligned}
\end{equation*}
where the second line holds because existing works \cite{wei2020theoretical, carmon2019unlabeled} have theoretically proven that pseudo-labeled data generated by self-training can decrease the generalization gap. 

Then according to the \cref{Def_robust_feature}, 
\begin{equation*}
\begin{aligned}
\rho_{l_\mathrm{nat}} & = \inf_{g} \mathrm{E}_{P_{X, Y}} [ l_\mathrm{nat} (X, Y)]  \\
& \le A_1 +  l_{\mathrm{m}} \sqrt{\frac{ \log{\frac{1}{\delta}}}{2\vert \mathcal{D}_l \vert}}
\end{aligned}
\end{equation*}
with probability at least $1-\delta$. 

2) Again according to Hoeffding's Inequality, with probability at least $1-\delta$ the following inequality holds: 
\begin{equation*}
\begin{aligned}
\mathrm{E}_{P_{X}} [ \Delta (f(X^\prime), f(X)) ] - \frac{1}{n} \sum_{x \in \mathcal{D}} \Delta (x^\prime, x) 
\le \Delta_{\mathrm{m}}\sqrt{\frac{ \log{\frac{1}{\delta}}}{2\vert \mathcal{D} \vert}}, 
\end{aligned}
\end{equation*}
where $\Delta_{\mathrm{m}}$ is the supremum of the distance $\Delta (\cdot, \cdot)$ over $\{ (z^\prime, z) \vert x \sim P_{X} \land x^\prime \in \mathcal{B}_\epsilon (x) \land z = f(x) \land z^\prime = f(x^\prime) \}$. And thus we have the following inequalities: 
\begin{equation*}
\begin{aligned}
& \mathrm{E}_{P_{X}} [ \sup_{X^\prime \in \mathcal{B}_\epsilon(X) }  \Delta (f(X^\prime), f(X))] - \Delta_{\mathrm{m}} \sqrt{\frac{ \log{\frac{1}{\delta}}}{2\vert D \vert}} \\
\le & \frac{1}{\vert D \vert} \sum_{x \in \mathcal{D}} \sup_{x^\prime \in \mathcal{B}_\epsilon(x) } \Delta (z^\prime, z) \\
\le & \frac{1}{\vert D \vert} \sum_{x \in \mathcal{D}} \sup_{x^\prime \in \mathcal{B}_\epsilon(x) } \big\{ \l_\mathrm{con}(z^\prime, z) - l_\mathrm{con}(z, z) \big\} \\
\le & \frac{1}{\beta\vert D \vert} \sum_{x \in \mathcal{D}} \sup_{x^\prime \in \mathcal{B}_\epsilon(x) } \big\{ \mathrm{KL} (C(x) \Vert C(x^\prime)) \\
& + \beta ( l_\mathrm{con}(z^\prime, z) - \l_\mathrm{con}(z, z) ) \big\} \\
= & \frac{1}{\beta\vert D \vert} \sum_{x \in \mathcal{D}} \sup_{x^\prime \in \mathcal{B}_\epsilon(x) }  l_\mathrm{adv} (x^\prime, x) \\
= &  \frac{2}{\beta} A_2, 
\end{aligned}
\end{equation*}
where the first inequality holds with probability at least $1-\delta$. Then according to \cref{Def_robust_feature}, we can get that 
\begin{equation*}
\begin{aligned}
\gamma_{\Delta} &  = \mathrm{E}_{ P_{X}} [ \sup_{X^\prime \in \mathcal{B}_{\epsilon}(X)} \Delta (f(X^\prime), f(X))]  \\
& \le \frac{2}{\beta} A_2 + \Delta_{\mathrm{m}} \sqrt{\frac{ \log{\frac{1}{\delta}}}{2\vert D \vert}}.
\end{aligned}
\end{equation*}

Therefore, feature $f$ captured by the target model $C = g \circ f$ trained by WSCAT is $\rho_{l_\mathrm{nat}}$-$\gamma_{\Delta}$-robust, where $\rho_{l_\mathrm{nat}} \le l_{\mathrm{m}} \sqrt{\frac{ \log{\frac{1}{\delta}}}{2\vert \mathcal{D}_l \vert}}$ with probability at least $1 - \delta$, and $\gamma_{\Delta} \le \frac{2}{\beta} A_2 + \Delta_{\mathrm{m}} \sqrt{\frac{ \log{\frac{1}{\delta}}}{2\vert D \vert}}$ with probability at least $1 - \delta$. 
\end{proof}

\section{Additional Experimental Results} 
\subsection{Performance Comparison (RQ1)}\label{supp:per}
The performance of WSCAT-sup and TRADES under fully-supervised setting across various model architectures is shown in \cref{tab-sup}. 

\begin{table}[t]
	\renewcommand{\arraystretch}{}
	\centering
	\caption{Performance of models trained by Standard, TRADES and WSCAT-sup (a variant of our WSCAT that uses only labeled data) under fully-supervised setting. }
	\label{tab-sup}
	\setlength{\tabcolsep}{0.06cm}
	\resizebox{0.48\textwidth}{!}{
	\begin{tabular}{cc|cccccc|c}
	\toprule
	\makecell[c]{Dataset\\(Model)}&{Method}&{Nat.}&{FGSM}&{PGD}&{CW}&{AA}&{Mean}&{NRF}\\
	\midrule
	\multirow{3}{*}{\makecell[c]{CIFAR10\\(ResNet50)}}&{Standard}&{${95.15}$}&{${42.37}$}&{${0.02}$}&{${0.01}$}&{${0.00}$}&{${0.00}$}&{${0.00}$} \\
	&{TRADES}&{${80.34}$}&{${56.05}$}&{${51.74}$}&{${49.27}$}&{${47.99}$}&{${55.10}$}&{${43.92}$} \\
	&{WSCAT-sup}&{${82.37}$}&{${59.84}$}&{${57.84}$}&{${52.50}$}&{${51.41}$}&{${59.07}$}&{${45.86}$} \\
	\midrule
	\multirow{3}{*}{\makecell[c]{CIFAR10\\(ResNet152)}}&{Standard}&{${95.26}$}&{${49.42}$}&{${0.01}$}&{${0.00}$}&{${0.00}$}&{${0.00}$}&{${0.00}$} \\
	&{TRADES}&{${81.52}$}&{${56.56}$}&{${51.55}$}&{${49.96}$}&{${48.15}$}&{${55.48}$}&{${57.01}$} \\
	&{WSCAT-sup}&{${80.98}$}&{${59.92}$}&{${58.46}$}&{${52.89}$}&{${52.09}$}&{${59.35}$}&{${58.56}$} \\
	\midrule
	\multirow{3}{*}{\makecell[c]{CIFAR10\\(WRN28-10)}}&{Standard}&{${96.23}$}&{${43.34}$}&{${0.01}$}&{${0.02}$}&{${0.00}$}&{${0.00}$}&{${0.00}$} \\
	&{TRADES}&{${84.65}$}&{${60.92}$}&{${56.34}$}&{${54.12}$}&{${52.85}$}&{${59.97}$}&{${50.44}$} \\
	&{WSCAT-sup}&{${84.18}$}&{${61.42}$}&{${59.72}$}&{${54.02}$}&{${52.91}$}&{${60.74}$}&{${55.12}$} \\
	\midrule
	\multirow{3}{*}{\makecell[c]{CIFAR100\\(WRN28-10)}}&{Standard}&{${78.62}$}&{${16.27}$}&{${0.34}$}&{${0.09}$}&{${0.00}$}&{${0.00}$}&{-} \\
	&{TRADES}&{${58.69}$}&{${33.74}$}&{${30.77}$}&{${28.31}$}&{${27.02}$}&{${33.00}$}&{-} \\
	&{WSCAT-sup}&{${59.71}$}&{${34.61}$}&{${32.63}$}&{${28.80}$}&{${27.46}$}&{${33.92}$}&{-} \\
	\bottomrule
	\end{tabular}
	}
\end{table}

\subsection{Ablation Study (RQ3)}\label{supp:abl}
The performance of WSCAT and WSCAT's different variants is shown in \cref{Tab-ap-CIFAR10,Tab-ap-CIFAR100,Tab-ap-imagenet}. 

\begin{table}[t]
	\renewcommand{\arraystretch}{}
	\centering
	\caption{Performance of different variants on \textbf{CIFAR10}. }
	\label{Tab-ap-CIFAR10}
	\setlength{\tabcolsep}{0.1cm}
	\resizebox{0.48\textwidth}{!}{
	\begin{tabular}{lcccc}
	\toprule
	{Methods}&{WSCAT}&{WSCAT-fixed}&{WSCAT-self}&{WSCAT-std}\\
	\midrule
	{Natural}&{${80.93}_{\pm0.14}$}&{${79.04}_{\pm0.45}$}&{${80.72}_{\pm0.12}$}&{${76.65}_{\pm0.30}$}\\
	{FGSM}&{${59.62}_{\pm0.16}$}&{${57.56}_{\pm0.22}$}&{${58.71}_{\pm0.25}$}&{${55.33}_{\pm0.37}$}\\
	{PGD}&{${58.52}_{\pm0.22}$}&{${54.55}_{\pm0.17}$}&{${54.58}_{\pm0.43}$}&{${53.75}_{\pm0.18}$}\\
	{CW}&{${53.15}_{\pm0.08}$}&{${51.66}_{\pm0.03}$}&{${52.20}_{\pm0.34}$}&{${48.68}_{\pm0.10}$}\\
	{AA}&{${52.23}_{\pm0.06}$}&{${50.77}_{\pm0.06}$}&{${51.20}_{\pm0.34}$}&{${48.00}_{\pm0.02}$}\\
	\midrule
	{Mean}&{${59.40}_{\pm0.05}$}&{${57.20}_{\pm0.06}$}&{${57.80}_{\pm0.27}$}&{${54.88}_{\pm0.03}$}\\
	\bottomrule
	\end{tabular}}
\end{table}

\begin{table}[t]
	\renewcommand{\arraystretch}{}
	\centering
	\caption{Performance of different variants on \textbf{CIFAR100}. }
	\label{Tab-ap-CIFAR100}
	\setlength{\tabcolsep}{0.1cm}
	\resizebox{0.48\textwidth}{!}{
	\begin{tabular}{lcccc}
	\toprule
	{Methods}&{WSCAT}&{WSCAT-fixed}&{WSCAT-self}&{WSCAT-std}\\
	\midrule
	{Natural}&{${55.14}_{\pm0.52}$}&{${55.09}_{\pm0.08}$}&{${54.70}_{\pm1.48}$}&{${51.66}_{\pm0.18}$}\\
	{FGSM}&{${28.41}_{\pm0.09}$}&{${27.43}_{\pm0.35}$}&{${27.55}_{\pm0.49}$}&{${25.22}_{\pm0.46}$}\\
	{PGD}&{${25.26}_{\pm0.32}$}&{${23.84}_{\pm0.36}$}&{${24.08}_{\pm0.11}$}&{${21.89}_{\pm0.29}$}\\
	{CW}&{${22.99}_{\pm0.41}$}&{${22.65}_{\pm0.03}$}&{${22.04}_{\pm0.58}$}&{${19.39}_{\pm0.40}$}\\
	{AA}&{${21.82}_{\pm0.40}$}&{${21.83}_{\pm0.01}$}&{${20.77}_{\pm0.72}$}&{${18.70}_{\pm0.15}$}\\
	\midrule
	{Mean}&{${27.43}_{\pm0.29}$}&{${26.97}_{\pm0.02}$}&{${26.36}_{\pm0.44}$}&{${23.83}_{\pm0.05}$}\\
	\bottomrule
	\end{tabular}}
\end{table}

\begin{table}[t]
	\renewcommand{\arraystretch}{}
	\centering
	\caption{Performance of different variants on \textbf{ImageNet32-100}. }
	\label{Tab-ap-imagenet}
	\setlength{\tabcolsep}{0.1cm}
	\resizebox{0.48\textwidth}{!}{
	\begin{tabular}{lcccc}
	\toprule
	{Methods}&{WSCAT}&{WSCAT-fixed}&{WSCAT-self}&{WSCAT-std}\\
	\midrule
	{Natural}&{${34.64}_{\pm2.76}$}&{${33.28}_{\pm0.00}$}&{${32.32}_{\pm0.00}$}&{${31.43}_{\pm0.11}$}\\
	{FGSM}&{${12.63}_{\pm0.13}$}&{${12.54}_{\pm0.00}$}&{${12.62}_{\pm0.00}$}&{${8.61}_{\pm0.19}$}\\
	{PGD}&{${9.89}_{\pm0.35}$}&{${9.94}_{\pm0.00}$}&{${9.80}_{\pm0.00}$}&{${6.90}_{\pm0.20}$}\\
	{CW}&{${8.01}_{\pm0.37}$}&{${8.02}_{\pm0.00}$}&{${8.06}_{\pm0.00}$}&{${5.11}_{\pm0.23}$}\\
	{AA}&{${7.27}_{\pm0.33}$}&{${7.14}_{\pm0.00}$}&{${7.06}_{\pm0.00}$}&{${4.61}_{\pm0.23}$}\\
	\midrule
	{Mean}&{${10.59}_{\pm0.32}$}&{${10.52}_{\pm0.00}$}&{${10.46}_{\pm0.00}$}&{${7.09}_{\pm0.27}$}\\
	\bottomrule
	\end{tabular}}
\end{table}

\subsection{Training Time (RQ5)}\label{supp:time}
\begin{table}[t]
	\renewcommand{\arraystretch}{}
	\centering
	\caption{Epoch time of WSCAT and RST. }
	\label{Tab-ap-time}
	\setlength{\tabcolsep}{0.20cm}
	\begin{tabular}{l|c|c|c}
	\toprule
	{Datasets}&{CIFAR10}&{CIFAR100}&{ImageNet32-100}\\
	\midrule
	{WSCAT}&{$5^{\prime}15^{\prime\prime}$}&{$5^{\prime}18^{\prime\prime}$}&{$13^{\prime}02^{\prime\prime}$}\\
	{RST}&{$5^{\prime}14^{\prime\prime}$}&{$5^{\prime}18^{\prime\prime}$}&{$13^{\prime}20^{\prime\prime}$}\\
	\bottomrule
	\end{tabular}
\end{table}

To show WSCAT does not excessively increases the training time than existing semi-supervised AT methods, we compare WSCAT's epoch time with that of RST, which is an efficient semi-supervised AT method \cite{zhang2024provable}. The result is shown in \cref{Tab-ap-time}, from which one can observe that WSCAT does not bring additional training time cost overall. The result is reasonable since during a batch of the training, the loss defined in \cref{Eq-CON} can be calculated just based on points in that batch instead of the entire dataset.



\end{document}